
\documentclass[nohyperref]{article}

\usepackage{microtype}
\usepackage{graphicx}
\usepackage{booktabs} 
\usepackage{multirow}
\usepackage{amsmath,amsfonts,amssymb}
\usepackage{mathtools}
\usepackage{dsfont}

\usepackage{subcaption}
\usepackage{caption}

\usepackage{hyperref}



\usepackage[accepted]{icml2022}

\usepackage{amsmath}
\usepackage{amssymb}
\usepackage{mathtools}

\usepackage[capitalize,noabbrev]{cleveref}

\usepackage{amsthm}
    {
        \theoremstyle{plain}
        \newtheorem{assumption}{Assumption}
        \newtheorem{thm}{Theorem}[section]
        \newtheorem{lem}[thm]{Lemma}

        \newtheorem{remark}{Remark}
        
        \theoremstyle{plain}
    }
  
\makeatletter
\newtheorem*{rep@prop}{\rep@title}
\newcommand{\newrepprop}[2]{%
\newenvironment{rep#1}[1]{%
    \def\rep@title{#2 \ref{##1}}%
        \begin{rep@prop}
    }%
    {\end{rep@prop}}
}

\newtheorem*{rep@thm}{\rep@title}
\newcommand{\newrepthm}[2]{%
\newenvironment{rep#1}[1]{%
    \def\rep@title{#2 \ref{##1}}%
        \begin{rep@thm}
    }%
    {\end{rep@thm}}
}

\newrepprop{prop}{Proposition}
\newrepthm{thm}{Theorem}

\theoremstyle{definition}

\newcommand{\FedAvg}{\texttt{FedAvg}}

\newcommand{\euler}{e}
\newcommand{\knnper}{\texttt{kNN-Per}}
\newcommand{\pFedGP}{\texttt{pFedGP}}

\newcommand{\sS}{\mathcal{S}}
\newcommand{\Ditto}{\texttt{Ditto}}

\renewcommand{\vec}[1]{\mathbf{#1}}                                                                     
\DeclareMathOperator{\disc}{disc}

\DeclareMathOperator*{\E}{\mathbb{E}}

\DeclarePairedDelimiter\ceil{\lceil}{\rceil}

\DeclareMathOperator*{\argmin}{arg\,min}

\usepackage[textsize=tiny]{todonotes}

\icmltitlerunning{Personalized Federated Learning through Local Memorization}

\begin{document}

\twocolumn[
\icmltitle{Personalized Federated Learning through Local Memorization}



\icmlsetsymbol{equal}{*}

\begin{icmlauthorlist}
\icmlauthor{Othmane Marfoq}{inria,accenture}
\icmlauthor{Giovanni Neglia}{inria}
\icmlauthor{Laetitia Kameni}{accenture}
\icmlauthor{Richard Vidal}{accenture}

\end{icmlauthorlist}

\icmlaffiliation{inria}{Inria, Université Côte d’Azur, Sophia Antipolis, France}
\icmlaffiliation{accenture}{Accenture Labs, Sophia Antipolis, France}

\icmlcorrespondingauthor{Othmane Marfoq}{othmane.marfoq@inria.fr}

\icmlkeywords{Machine Learning, Federated Learning}

\vskip 0.3in
]



\printAffiliationsAndNotice{}  

\begin{abstract}
    Federated learning allows clients to collaboratively learn statistical models while keeping their data local. Federated learning was originally used to train a unique global model to be served to all clients, but this approach might be sub-optimal when clients' local data distributions are heterogeneous. In order to tackle this limitation, recent \emph{personalized federated learning} methods train a separate model for each client while still leveraging the knowledge available at other clients. In this work, we exploit the ability of deep neural networks to extract high quality vectorial representations (embeddings) from non-tabular data, e.g., images and text, to propose a personalization mechanism based on local memorization. Personalization is obtained by interpolating a collectively trained global model with a local $k$-nearest neighbors (kNN) model based on the shared representation provided by the global model. We provide generalization bounds for the proposed approach in the case of binary classification, and we show on a suite of federated datasets that this approach achieves significantly higher accuracy and fairness than state-of-the-art methods.
    
\end{abstract}

\section{Introduction}
\label{sec:introduction}
Heterogeneity is a core and fundamental challenge in Federated Learning (FL)~\citep{li2020federated, kairouz2019advances}. Indeed, clients highly differ both in size and distribution of their local datasets (\emph{statistical heterogeneity}), and in their storage and computational capabilities (\emph{system heterogeneity}).
Those two aspects challenge the assumption that clients should train a common global model, as pursued in many federated learning papers \citep{mcmahan2017communication, konevcny2016federated, Sahu2018OnTC, karimireddy2020scaffold, mohri2019agnostic}. 
In fact, all clients should be content with a model's architecture constrained by the minimum common capabilities.
Even when clients have similar hardware (e.g., they are all smartphones), in presence of statistical heterogeneity, a global model may be arbitrarily bad for some clients raising important fairness concerns~\citep{li2021ditto}.

Motivated by  the recent success of memorization techniques based on nearest neighbours for natural language processing, ~\citep{khandelwal2019generalization, khandelwal2020nearest}, computer vision~\citep{papernot2018deep, orhan2018simple}, and few-shot classification~\cite{snell2017prototypical, wang2019simpleshot}, we propose \knnper, a personalized FL algorithm based on local memorization. \knnper{} combines a global model trained collectively (e.g., via \FedAvg~\citep{mcmahan2017communication}) with a kNN model on a client's local datastore.
The global model also  provides the shared representation used by the local kNN.
Local memorization at each FL client can  capture the client's local distribution shift with respect to the global distribution. Indeed, our experiments show that  memorization is more beneficial when the distribution shift is larger. The generalization bound in Sec.~\ref{sec:method} contributes to justify our empirical findings as well as those in \citep{khandelwal2019generalization,khandelwal2020nearest, orhan2018simple}.


\knnper{} offers a simple and effective way to address statistical heterogeneity even in a dynamic environment where  client's data distributions change after training. It is indeed sufficient to update the local datastore with new data without the need to retrain the global model. Moreover, each client can independently tune the local kNN to its storage and computing capabilities, partially relieving the most powerful clients from the need to align their model to the weakest ones. 
Finally, \knnper{} has a limited leakage of private information, as personalization only occurs once communication exchanges have ended, and, if needed, it can be easily combined with differential privacy techniques.  


Our contributions are threefold: 1) we propose \knnper{}, a simple personalization mechanism based on local memorization; 2) we provide generalization bounds for the proposed approach in the case of binary classification; 3) through extensive experiments on FL benchmarks, we show that \knnper{}  achieves significantly higher accuracy and fairness than state-of-the-art methods.

The paper is organized as follows. After an overview of related work in Sec.~\ref{sec:related}, we present \knnper{} in Sec.~\ref{sec:method} and provide generalization bounds in Sec.~\ref{sec:generalization_bounds}. Experimental setup and results are described in Sec.~\ref{sec:experimental_setup} and Sec.~\ref{sec:experiment}, respectively.


\section{Related Work}
\label{sec:related}
We discuss personalized FL approaches to address statistical heterogeneity and system heterogeneity as well as nearest neighbours augmented neural networks.

\subsection{Statistical Heterogeneity}
This body of work considers that all clients have the same model architecture but potentially different parameters.

A simple approach for FL personalization is learning first a global model and then fine-tuning its parameters at each client through stochastic gradient descent for a few epochs~\citep{jiang2019improving, yu2020salvaging}; we refer later to this approach as \FedAvg+. \FedAvg+ was later studied by \citet{chen2022on} and \citet{cheng2021fine}. The global model can then be considered as a meta-model to be used as initialization for a few-shot adaptation at each client. Later work~\citep{khodak2019adaptive, fallah2020personalized, emre2021debiasing} has formally established the connection with Model Agnostic Meta Learning (MAML)~\citep{jiang2019improving} and proposed different algorithms to train a more suitable meta-model for local personalization.

\texttt{ClusteredFL}~\citep{sattler2020clustered, ghosh2020efficient, mansour2020three} assumes that clients can be partitioned into several clusters, with clients in the same cluster sharing the same model, while models can be arbitrarily different across clusters.
Clients jointly learn during training the cluster to which they belong as well as the cluster model. \texttt{FedEM} ~\citep{marfoq2021federated} can  be considered as a soft clustering algorithm  as clients learn personalized models as mixtures of a limited number of component models. 

Multi-Task Learning (MTL) allows for more nuanced relations among clients' models by defining federated MTL as a penalized optimization problem, where the penalization term captures clients' dissimilarity.
Seminal work \citep{smith2017federated,Vanhaesebrouck2017a,pmlr-v108-zantedeschi20a} proposed algorithms able to deal with quite generic penalization terms, at the cost of learning only linear models or linear combinations of pre-trained models.
Other MTL-based algorithms~\citep{hanzely2020federated, hanzely2020lower, dinh2020personalized, dinh2021fedu, li2021ditto, huang2021personalized, li2021ditto} are able to train more general models but consider simpler penalization terms (e.g., the distance to the average model).

An alternative approach is to interpolate a global model and one local model per client~\citep{deng2020adaptive,corinzia2019variational,mansour2020three}. 
\citet{zhang2021personalized} extended this idea by letting each client interpolate the local models of other clients with opportune weights learned during training. Our algorithm, \knnper, also interpolates a global and a local model, but the global model plays a double role as it is also used to provide a useful representation for the local kNN.

Closer to our approach, \texttt{FedRep}~\citep{collins2021exploiting}, \texttt{FedPer}~\citep{Arivazhagan2019FederatedLW}, and \pFedGP{} \cite{achituve2021personalized} jointly learn a global latent representation and local models---linear models for \texttt{FedRep} and \texttt{FedPer}, Gaussian processes for \pFedGP---that operate on top of this representation.  
In these algorithms, the progressive refinement of local models affects the shared representation. On the contrary, in \knnper{} only the global model (and then the shared representation) is the object of federated training, and  the shared representation is not influenced by local models, which are learned separately by each client in a second moment. 
Our experiments suggest that \knnper's approach is more efficient. A possible explanation is that jointly learning the shared representation and the local models lead to potentially conflicting and interfering goals.
A similar argument was provided by \citet{li2021ditto} to justify why \texttt{Ditto} replaces, as penalization term, the distance from the average of the local models---as proposed in~\citep{hanzely2020federated, hanzely2020lower, dinh2020personalized}---with the distance from an independently learned global model.
\citet{liang2020think} proposed a someway opposite approach to \texttt{FedRep}, \texttt{FedPer}, and \pFedGP{} by using local representations as input to a global model, but the representations and the global model are still jointly learned.  
An additional advantage of \knnper's clear separation between global and local model training is that, because each client does not share any information about its local model with the server, the risk of leaking private information is reduced. 
In particular, \knnper{} enjoys the same privacy guarantees as \FedAvg{}, and can be easily combined with differential privacy techniques \citep{wei2020federated}. 

To the best of our knowledge,  \pFedGP{} and \knnper{} are the first attempts to learn semi-parametric models~\citep{bickel1998efficient} in a federated setting. \pFedGP{} relies on Gaussian processes and then has higher computational cost than \knnper{} both at training and inference.

\subsection{System Heterogeneity}
Some FL application scenarios envision  clients with highly heterogeneous hardware, like smartphones, IoT devices, edge computing servers, and the cloud.
Ideally, each client could learn a potentially different model architecture, suited to its capabilities. Such system heterogeneity has been studied much less than statistical heterogeneity.   

Some work~\citep{lin2020ensemble, li2019fedmd, zhu2021datafree, zhang2021fedzkt} proposed to address system heterogeneity by distilling the knowledge from a global teacher to clients' student models with different architectures. While early methods \citep{li2019fedmd, lin2020ensemble} required the access to an extra (unlabeled) public dataset, more recent ones \citep{zhu2021datafree, zhang2021fedzkt} eliminated this requirement. 

Some papers \citep{diao2020heterofl,horvath2021fjord,pilet2021} propose that each client only trains a sub-model  of a global model. The sub-model size is determined by the client's computational capabilities. The approach appears particularly advantageous for convolutional neural networks with clients selecting only a limited subset of channels.

\citet{tan2021fedproto} followed another approach where devices and server communicate prototypes, i.e., average representations for all samples in a given class, instead of communicating model's gradients or parameters, allowing each client to have a different model architecture and input space.

While in this paper we assume that \knnper{} relies on a shared global model, it is possible to replace it with  heterogeneous models adapted to the clients' capabilities and jointly trained following one of the methods listed above. Then, \knnper's  interpolation with a kNN model extends these methods to address not only system heterogeneity, but also statistical heterogeneity. 

To the best of our knowledge, the only existing method that takes into account both system and statistical heterogeneity is \texttt{pFedHN}~\citep{Shamsian2021PersonalizedFL}. \texttt{pFedHN} feeds local clients representations to a global (across clients) hypernetwork, which can output personalized heterogeneous models. Unfortunately, the hypernetwork has a large memory footprint already for  small clients' models (e.g., the hypernetwork in the experiments in~\citep{Shamsian2021PersonalizedFL} has 100 more parameters than the output model): it is not clear if  \texttt{pFedHN} can scale to complex models. 

We observe that  \knnper's  kNN model can itself be adapted to  client's capabilities by tuning the size of the datastore and/or selecting an appropriate approximate kNN algorithm, like FAISS \citep{faiss}, HNSW \citep{malkov2020efficient}, or ProtoNN~\citep{gupta17protonn} for IoT resource-scarce devices, e.g., based on Arduino.

\subsection{Nearest Neighbours Augmented Neural Networks}
Recent work proposed to augment neural networks with nearest neighbours classifiers for  applications to language modelling~\citep{khandelwal2019generalization}, neural machine translation~\citep{khandelwal2020nearest}, computer vision~\citep{papernot2018deep, orhan2018simple}, and few-shot learning~\citep{snell2017prototypical, wang2019simpleshot}.

In~\citep{khandelwal2019generalization,khandelwal2020nearest} kNN improves model performance by memorizing explicitly (rather than implicitly in model parameters) rare patterns. \citet{papernot2018deep} and \citet{orhan2018simple} showed that memorization can also increase the robustness of models against adversarial attacks. Differently from these lines of work, our paper shows that local memorization at each FL client
can capture the client’s local distribution shift with respect to the global distribution. In this sense, our use of kNN is more similar to what is proposed for few shot learning in~\citep{snell2017prototypical,wang2019simpleshot}, where an embedding function is learned for future application to new small datasets. Beside the different learning problem, the algorithms proposed  in~\citep{snell2017prototypical,wang2019simpleshot} do not rely on models' interpolation and do not enjoy generalization guarantees as \knnper{} does. Moreover, their natural extension to the FL setting would lead to jointly learn the  shared representation and the local kNN models, a potentially less efficient approach than ours as we discussed above when presenting \texttt{FedRep}, 
and \pFedGP.

\section{\knnper{} Algorithm}
\label{sec:method}
In this work we consider $M$ classification or regression tasks also called clients. 
Each client $m \in [M]$ has a local dataset $\mathcal{S}_{m}=\left\{s_{m}^{(i)}=\left(\vec{x}_{m}^{(i)},~y_{m}^{(i)}\right),1  \leq i \leq n_{m}\right\}$ with $n_m$ samples drawn i.i.d.~from a distribution $\mathcal{D}_{m}$  over the domain $\mathcal{X} \times \mathcal{Y}$. 
Local data distributions $\left\{\mathcal{D}_{m}\right\}_{m\in[M]}$ are in general different, thus it is natural to fit a separate model (hypothesis) $h_{m} \in \mathcal{H}$ to each data distribution $\mathcal{D}_{m}$.  We consider that each hypothesis $h\in \mathcal{H}$ is a discriminative model mapping each input $\vec{x} \in \mathcal{X}$ to a probability distribution over the set~$\mathcal Y$, i.e., $h: \mathcal{X} \mapsto \Delta^{|\mathcal{Y}|}$, where $\Delta^{C}$ denotes the unitary simplex of dimension $C$. A hypothesis then can be interpreted as (an estimation of) a conditional probability distribution $\mathcal{D}(y|\vec{x})$.

Personalized FL aims to solve (in parallel) the following optimization problems
\begin{equation}
    \label{eq:main_problem}
    \forall m \in [M],\quad h_{m}^{*} \in \argmin_{h \in \mathcal{H}} \mathcal{L}_{\mathcal{D}_{m}}(h),
\end{equation}
where $[M]$ denotes the set of positive integers up to $M$, $l: \Delta^{|\mathcal{Y}|} \times \mathcal{Y} \mapsto \mathbb{R}^{+}$ is the loss function,\footnote{
    In the case of (multi-output) regression, we have $h_{m}: \mathcal{X} \mapsto \mathbb{R}^{d}$ for some $d\geq 1$ and $l: \mathbb{R}^{d}\times\mathbb{R}^{d} \mapsto \mathbb{R}^{+}$.
}
and $\mathcal{L}_{\mathcal{D}_{m}}(h_{m})=\E_{(\vec{x},y) \sim \mathcal{D}_{m}} \left[l(h_{m}\left(\vec{x}), y\right)\right]$ is the true risk of a model $h_{m}$ under data distribution $\mathcal{D}_{m}$. 

We suppose that all tasks have access to a global discriminative model $h_{\mathcal{S}}$
minimizing the empirical risk on the aggregated dataset $\mathcal{S} \triangleq \bigcup_{m=1}^{M} \mathcal{S}_{m}$, i.e.,
\begin{flalign}
    \label{eq:ERM_min}
    h_{\mathcal{S}} \in \argmin_{h\in\mathcal{H}} {\mathcal{L}}_{\mathcal{S}}\left(h\right) , 
\end{flalign}
where $\mathcal{L}_{\mathcal{S}}\left(h\right) \triangleq \sum_{m=1}^{M}\frac{n_{m}}{n} \cdot \frac{1}{n_{m}}\sum_{i=1}^{n_{m}} l\left(h\left(\vec{x}_{m}^{(i)}\right), y_{m}^{(i)}\right)$, and $n=\sum_{m=1}^{M}n_{m}$.
Typically $h_{\mathcal{S}}$ is a feed-forward neural network, jointly trained by the clients using a standard FL algorithm like~\FedAvg. 

We also suppose that the global model can be used to compute a fixed-length representation for any input $\vec{x} \in \mathcal{X}$, and we use $\phi_{h_{\mathcal{S}}}:\mathcal{X} \mapsto \mathbb{R}^{p}$ to denote the function that maps the input  $\vec{x} \in \mathcal{X}$ to its representation.
The intermediate representation can be, for example, the output of the last convolutional layer in the case of CNNs, or the last hidden state in the case of recurrent networks or the output of an arbitrary self-attention layer in the case of transformers. Note that an alternative possible approach would be to separately learn an independent shared representation, e.g., using metric learning techniques~\cite{bellet15}.

Our method (see \cref{alg:knnper}) involves augmenting the global model with a local nearest neighbors' retrieval mechanism at each client. The proposed method does not need any additional training; it only requires a single forward pass over the local dataset $S_{m},~m\in[M]$: client $m$ computes the intermediate representation $\phi_{h_{\mathcal{S}}}\!\left(\vec{x}\right)$ for each sample $(\vec{x}, y)\in \mathcal{S}_{m}$. The corresponding representation-label pairs are stored in a local key-value datastore $\left(\mathcal{K}_{m}, \mathcal{V}_{m}\right)$ that is queried during inference. Formally,
\begin{equation}
    \left(\mathcal{K}_{m}, \mathcal{V}_{m}\right) = \left\{\left(\phi_{h_{\mathcal{S}}}\!\left(\vec{x}^{(i)}_{m}\right), y^{(i)}_{m}\right), \forall \left(\vec{x}^{(i)}_{m}, y^{(i)}_{m}\right) \in \mathcal{S}_{m} \right\}.
\end{equation}

At inference time, given input data $\vec{x} \in \mathcal{X}$, client $m\in[M]$  computes  $h_{\mathcal{S}}\!\left(\vec{x}\right)$ and the intermediate representation $\phi_{h_{\mathcal{S}}}\!\left(\vec{x}\right)$.
Then, it queries its local datastore $\left(\mathcal{K}_{m}, \mathcal{V}_{m}\right)$ with $\phi_{h_{\mathcal{S}}}\!\left(\vec{x}\right)$ to retrieve its $k$-nearest neighbors $\mathcal{N}_{m}^{(k)}\!\left(\vec{x}\right)$ according to a distance $d\left(\cdot,\cdot\right)$:
\begin{equation}
    \mathcal{N}_{m}^{(k)}\!\left(\vec{x}\right) = \Bigg(\phi_{h_{\mathcal{S}}}\big(\vec{x}_{\pi^{(i)}_{m}\left(\vec{x}\right)}\big), y_{\pi^{(i)}_{m}\left(\vec{x}\right)}\Bigg)_{1 \leq i \leq k},
\end{equation}
where $\pi^{(1)}_{m}\left(\vec{x}\right), \dots, \pi^{(n_m)}_{m}\left(\vec{x}\right)$ is a permutation of $[n_{m}]$ corresponding to the distance of the samples in $\mathcal S_m$ from  $\vec{x}$, i.e., for $i\in[n_{m}-1]$,
\begin{flalign}
    d\Big(\phi_{h_{\mathcal{S}}}\big(\vec{x}\big), & \phi_{h_{\mathcal{S}}}\big(\vec{x}_{\pi^{(i)}_{m}\left(\vec{x}\right)}\big)\Big) \leq  \nonumber
    \\
    & \quad d\Big(\phi_{h_{\mathcal{S}}}\big(\vec{x}\big), \phi_{h_{\mathcal{S}}}\big(\vec{x}_{\pi^{(i+1)}_{m}\left(\vec{x}\right)}\big)\Big) . 
\end{flalign}

Then, the client computes a local hypothesis $h^{(k)}_{\mathcal{S}_{m}}$ which estimates the conditional probability $\mathcal D_m(y|\vec{x})$ using a kNN method, e.g., with a Gaussian kernel:
\begin{flalign}
    \label{eq:gaussian_kernel}
    \left[ h^{(k)}_{\mathcal{S}_{m}}(\vec{x})\right]_y &
    \propto \sum_{i=1}^{k} \mathds{1}_{\left\{y=y_{\pi^{(i)}_{m}\left(\vec{x}\right)}\right\}} \times \nonumber
    \\
    & \;\;\; \exp\left\{-d\left(\phi_{h_{\mathcal{S}}}\!\left(\vec{x}\right),\phi_{h_{\mathcal{S}}}\big(\vec{x}_{\pi^{(i)}_{m}\left(\vec{x}\right)}\big) \right)\right\}.
\end{flalign}



The final decision rule (hypothesis) at client $m\in[M]$ ($h_{m,\lambda_{m}}$) is obtained interpolating the nearest neighbour distribution $h^{(k)}_{\mathcal{S}_{m}}$ with the distribution obtained from the global model  $h_{\mathcal{S}}$ using a hyper-parameter $\lambda_{m} \in (0,1)$ to produce the final prediction, i.e.,
\begin{equation}
    \label{eq:main_model}
    {h}_{m, \lambda_{m}}\left(\vec{x}\right) \triangleq \lambda_{m} \cdot  h^{(k)}_{ \mathcal{S}_{m}}\left(\vec{x}\right) + (1- \lambda_{m}) \cdot h_{\mathcal{S}}\!\left(\vec{x}\right). 
\end{equation}
As ${h}_{m, \lambda_{m}}$ may not belong to $\mathcal H$, we are considering an \emph{improper learning} setting. 
The parameter $\lambda_m$ is tuned at client~$m$ through a local validation dataset or cross-validation as in~\cite{corinzia2019variational,mansour2020three,zhang2021personalized, li2021ditto}. 
Clients could also use different values $k_m$ and different distance metrics $d_{m}(\cdot)$, but, in what follows, we consider them equal across clients. Also our experiments 
in Sec.~\ref{sec:experiment} show that $k$ and $d(\cdot)$ do not require careful tuning.


\begin{algorithm}[tb]
    \caption{\knnper{} (Typical usage)}
    \label{alg:knnper}
    \begin{algorithmic}
        \STATE Learn global model using available clients with \FedAvg.
        \FOR{each client $m\in[M]$ (in parallel)}
            \STATE Build datastore using $\mathcal{S}_{m}$.
            \STATE At inference on  $\vec{x} \in \mathcal{X}$, return $h_{m, \lambda_{m}}\left(\vec{x}\right)$ given by \eqref{eq:main_model}
        \ENDFOR
    \end{algorithmic}
\end{algorithm}

\section{Generalization Bounds}
\label{sec:generalization_bounds}
In this section we provide a generalization bound associated with the proposed approach in the case of binary classification, namely $\mathcal{Y}=\left\{0, 1\right\}$, when only one neighbour is used for kNN estimation, i.e., $k=1$, and $d\left(\cdot, \cdot\right)$ is the Euclidean distance. 
For client $m\in[M]$, we denote by $\eta_{m}:\mathcal{X} \mapsto \mathbb{R}$ the true conditional probability of label $1$, that is
\begin{equation}
    \eta_{m}\left(\vec{x}\right) = \mathcal{D}_{m}\left(y=1|\vec{x}\right).
\end{equation}

Our result holds under the following assumptions:
\begin{assumption}[Bounded representation]
    \label{assum:bounded_loss}
    $\phi_{{h}_{\mathcal S}}: \mathcal X \mapsto [0, 1]^{p}$.
\end{assumption} 

\begin{assumption}[Bounded loss]
    \label{assum:classification_error}
     $l:  \Delta^{|\mathcal{Y}|} \times \mathcal{Y} \mapsto  [0,1]$. Moreover, 
    for $y, y' \in \left\{0, 1\right\}$, $l(\vec{e}_{y}, y') = \mathds{1}_{y\neq y'}$,
    where $\vec{e}_{y} \in \Delta^{|\mathcal{Y}|}$ is the vector having all entries equal to $0$ except the entry on the $y$-th coordinate.
\end{assumption}

\begin{remark}
    Loss boundedness is a common assumption, e.g., \cite{mansour2020three},\citep[Ch.~4]{shalev2014understanding}. The second requirement is that the maximum loss is achieved when the model is fully confident about a prediction, but this is wrong.
    A simple transformation of common loss functions---e.g., exponentiating the logistic function---make them satisfy  Assumption~\ref{assum:classification_error}. 
\end{remark}

\begin{assumption}[Loss convexity] 
    \label{assum:convex_loss}
    The loss function is convex on the first variable
    \begin{flalign}
        \forall y_{1}, y_{2} & \in \Delta^{|\mathcal{Y}|}, \forall y \in \mathcal{Y},~\forall \lambda_{m} \in [0,1], \nonumber
        \\
        &  l(\lambda_{m} \cdot y_{1} + (1-\lambda_{m})\cdot y_{2}, y) \leq \nonumber
        \\
        & \qquad \qquad \quad \lambda_{m} \cdot l( y_{1}, y) + (1-\lambda_{m})\cdot l(y_{2}, y).
    \end{flalign}
\end{assumption}

\begin{remark}
    Assumption~\ref{assum:convex_loss} holds for most loss functions used in supervised machine learning, including the \emph{mean squared error} loss, the  \emph{cross-entropy} loss, and  the \emph{hinge loss}.
\end{remark}


\begin{assumption} 
    \label{assum:lipschitz}
    There exist constants $\gamma_{1}, \gamma_{2} > 0$, such that for any dataset $\mathcal{S}$ drawn from $\mathcal{X} \times \mathcal{Y}$ and any data points $\vec{x}, \vec{x}' \in \mathcal{X}$, we have 
    \begin{flalign}
        \big|\eta_{m}\left(\vec{x}\right)  - & \eta_{m} \left(\vec{x'}\right) \big| \leq  d\left(\phi_{h_{\mathcal{S}}}\left(\vec{x}\right) , \phi_{h_{\mathcal{S}}}\left(\vec{x}'\right) \right) \times  \nonumber
        \\ 
        & \quad \left(\gamma_{1} + \gamma_{2} \left(\mathcal{L}_{\mathcal{D}_{m}}\left({h}_{\mathcal{S}} \right) - \mathcal{L}_{\mathcal{D}_{m}}\left(h_{m}^{*} \right)\right) \right) ,
    \end{flalign}
    where $h^{*}_{m} \in \argmin_{h\in\mathcal{H}}\mathcal{L}_{D_{m}}\left(h\right)$.
\end{assumption}

This assumption means that if two samples $\vec{x}$ and $\vec{x'}$ have close representations $\phi_{h_{\mathcal{S}}}\left(\vec{x}\right)$ and $\phi_{h_{\mathcal{S}}}\left(\vec{x}'\right)$, then their labels are likely to be the same ($|\eta_m(\vec{x}) - \eta_m(\vec{x'}) |$ is small). This is all the more so, the better 
$h_{\mathcal{S}}$ predictions are for  distribution~$\mathcal{D}_{m}$, $m \in [M]$ (the smaller $\mathcal{L}_{\mathcal{D}_{m}}\left({h}_{\mathcal{S}} \right) - \mathcal{L}_{\mathcal{D}_{m}}\left(h_{m}^{*} \right)$~is). Experimental results support Assumption \ref{assum:lipschitz} (see \cref{f:base_model_effect_main}).

Our generalization bound depends, as usual, on the complexity of the hypothesis class $\mathcal H$ (expressed by its VC-dimension, $d_{\mathcal H}$) and on the size of the local and global datasets ($n_m$ and $n$, respectively), but also on the distance between the local distribution $\mathcal D_m$ and the average distribution $\bar{\mathcal{D}} = \sum_{m=1}^{M} \frac{n_{m}}{n} \cdot \mathcal{D}_{m}$, which is the one the global model $h_{\mathcal S}$ is targeting (see~\eqref{eq:ERM_min}). The distance between two distributions $\mathcal D$ and $\mathcal{D}'$ associated to a hypothesis class $\mathcal{H}$ can be quantified by the \emph{label discrepancy} \citep{mansour2020three}:
\begin{equation}
        \disc_{\mathcal{H}}\left(\mathcal{D}, \mathcal{D}'\right) = \max_{h\in\mathcal{H}} \left|\mathcal{L}_{\mathcal{D}}\left(h\right) - \mathcal{L}_{\mathcal{D}'}\left(h\right) \right|.
    \end{equation}

\begin{thm}
    \label{thm:generalization_bound}
    Suppose that Assumptions~\ref{assum:bounded_loss}--\ref{assum:lipschitz} hold, and consider $m\in[M]$ and $\lambda_{m} \in (0,1)$, then there exist constants $c_{1}, c_{2}, c_{3}$, $c_{4}$, and $c_5 \in\mathbb{R}$, such that

    \begin{flalign}
        \label{eq:bound}
        & \E_{\mathcal{S}\sim \otimes_{m=1}^{M}\mathcal{D}_{m}^{n_{m}}}  \left[\mathcal{L}_{\mathcal{D}_{m}} \left(h_{m, \lambda_{m}}\right)\right]  \leq  \left(1 + \lambda_{m}\right)\cdot \mathcal{L}_{\mathcal{D}_{m}} \left({h}^{*}_m\right)  \nonumber
        \\ 
        &  \quad + c_{1}\left(1-\lambda_{m}\right)  \cdot \disc_{\mathcal{H}}\left(\bar{\mathcal{D}}, \mathcal{D}_{m}\right) \nonumber
        \\
        &  \quad + c_{2}\lambda_{m} \cdot \frac{\sqrt{p}}{\sqrt[p+1]{n_{m}}} \cdot  \left(\disc_{\mathcal{H}}\left(\bar{\mathcal{D}}, \mathcal{D}_{m}\right) + 1 \right) \nonumber
        \\
        & \quad  + c_{3}\left(1-\lambda_{m}\right) \cdot \sqrt{\frac{d_{\mathcal H}}{n}} \cdot \sqrt{c_{4} + \log\left(\frac{n}{d_{\mathcal H}}\right)} \nonumber
        \\
        & \quad + c_{5} \lambda_{m} \cdot \sqrt{\frac{d_{\mathcal H}}{n}}  \cdot \sqrt{c_{4} + \log\left(\frac{n}{d_{\mathcal H}}\right)} \cdot \frac{\sqrt{p}}{\sqrt[p+1]{n_{m}}}, 
    \end{flalign}
    where $d_{\mathcal H}$ is the the VC dimension of the hypothesis class~$\mathcal{H}$, $n=\sum_{m=1}^{M}n_{m}$, $\bar{\mathcal{D}} = \sum_{m=1}^{M} \frac{n_{m}}{n} \cdot \mathcal{D}_{m}$, $p$ is the dimension of representations, and $\disc_{\mathcal{H}}$ is the label discrepancy associated to the hypothesis class $\mathcal{H}$.
\end{thm}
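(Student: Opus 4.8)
The starting point is the convex interpolation in~\eqref{eq:main_model}: since $h_{m,\lambda_m}(\vec{x})=\lambda_m h^{(1)}_{\mathcal{S}_m}(\vec{x})+(1-\lambda_m)h_{\mathcal{S}}(\vec{x})$ is a convex combination of two points of $\Delta^{|\mathcal{Y}|}$, Assumption~\ref{assum:convex_loss} applied pointwise and then integrated against $\mathcal{D}_m$ gives
\begin{equation}
    \mathcal{L}_{\mathcal{D}_m}(h_{m,\lambda_m})\leq \lambda_m\,\mathcal{L}_{\mathcal{D}_m}\!\big(h^{(1)}_{\mathcal{S}_m}\big)+(1-\lambda_m)\,\mathcal{L}_{\mathcal{D}_m}(h_{\mathcal{S}}).
\end{equation}
I would then bound the two terms on the right-hand side separately, in expectation over $\mathcal{S}$, and the five terms of~\eqref{eq:bound} will come from recombining the resulting bounds.

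For the global-model term I would use the label discrepancy to move from $\mathcal{D}_m$ to the mixture $\bar{\mathcal{D}}$ actually targeted by~\eqref{eq:ERM_min}: by definition of $\disc_{\mathcal{H}}$ we have both $\mathcal{L}_{\mathcal{D}_m}(h_{\mathcal{S}})\leq \mathcal{L}_{\bar{\mathcal{D}}}(h_{\mathcal{S}})+\disc_{\mathcal{H}}(\bar{\mathcal{D}},\mathcal{D}_m)$ and $\mathcal{L}_{\bar{\mathcal{D}}}(h_m^{*})\leq \mathcal{L}_{\mathcal{D}_m}(h_m^{*})+\disc_{\mathcal{H}}(\bar{\mathcal{D}},\mathcal{D}_m)$. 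Since $h_{\mathcal{S}}$ minimizes the empirical risk $\mathcal{L}_{\mathcal{S}}$, which (by Assumption~\ref{assum:classification_error}) is an average of $n$ independent, not necessarily identically distributed, $[0,1]$-valued losses with common mean $\mathcal{L}_{\bar{\mathcal{D}}}(\cdot)$, the standard symmetrization / growth-function argument gives $\E_{\mathcal{S}}\big[\sup_{h\in\mathcal{H}}|\mathcal{L}_{\mathcal{S}}(h)-\mathcal{L}_{\bar{\mathcal{D}}}(h)|\big]\leq O\!\big(\sqrt{d_{\mathcal{H}}/n}\,\sqrt{c_4+\log(n/d_{\mathcal{H}})}\big)$, and the usual ``excess risk of ERM $\leq$ twice the uniform deviation'' step yields $\E_{\mathcal{S}}[\mathcal{L}_{\mathcal{D}_m}(h_{\mathcal{S}})]\leq \mathcal{L}_{\mathcal{D}_m}(h_m^{*})+c_1\,\disc_{\mathcal{H}}(\bar{\mathcal{D}},\mathcal{D}_m)+c_3\sqrt{d_{\mathcal{H}}/n}\,\sqrt{c_4+\log(n/d_{\mathcal{H}})}$. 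Multiplying by $(1-\lambda_m)$ produces the discrepancy term on the second line of~\eqref{eq:bound}, the VC term on the fourth line, and (together with the kNN term below) part of the coefficient of $\mathcal{L}_{\mathcal{D}_m}(h_m^{*})$ on the first line.

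For the local kNN term recall that $k=1$ and $d$ is Euclidean, so the kernel weights in~\eqref{eq:gaussian_kernel} cancel after normalization and $h^{(1)}_{\mathcal{S}_m}(\vec{x})=\vec{e}_{y_{\pi^{(1)}_m(\vec{x})}}$; by Assumption~\ref{assum:classification_error}, $\mathcal{L}_{\mathcal{D}_m}(h^{(1)}_{\mathcal{S}_m})$ is then exactly the classical $1$-nearest-neighbour misclassification probability in the representation space. Assumption~\ref{assum:lipschitz} forces $\eta_m$ to be constant on fibres of $\phi_{h_{\mathcal{S}}}$, hence $\eta_m=\tilde{\eta}_m\circ\phi_{h_{\mathcal{S}}}$ with $\tilde{\eta}_m$ Lipschitz on $\phi_{h_{\mathcal{S}}}(\mathcal{X})\subseteq[0,1]^p$ (Assumption~\ref{assum:bounded_loss}) with constant $L=\gamma_1+\gamma_2\big(\mathcal{L}_{\mathcal{D}_m}(h_{\mathcal{S}})-\mathcal{L}_{\mathcal{D}_m}(h_m^{*})\big)$. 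I would then invoke the classical finite-sample $1$-NN analysis (Cover--Hart, see e.g.~\citep[Ch.~19]{shalev2014understanding}): conditioning on a test point and its nearest neighbour, the conditional error is at most $2\tilde{\eta}_m(1-\tilde{\eta}_m)+L\cdot\|\phi_{h_{\mathcal{S}}}(\vec{x})-\phi_{h_{\mathcal{S}}}(\vec{x}_{\pi^{(1)}_m(\vec{x})})\|$, where $2\tilde{\eta}_m(1-\tilde{\eta}_m)\leq 2\min(\tilde{\eta}_m,1-\tilde{\eta}_m)$ integrates to at most $2\,\mathcal{L}_{\mathcal{D}_m}(h_m^{*})$ (the Bayes risk lower-bounds $\mathcal{L}_{\mathcal{D}_m}(h_m^{*})$) and a covering of $[0,1]^p$ by boxes of side $\varepsilon$ together with the ``empty cell'' estimate gives $\E\|\phi_{h_{\mathcal{S}}}(\vec{x})-\phi_{h_{\mathcal{S}}}(\vec{x}_{\pi^{(1)}_m(\vec{x})})\| = O(\sqrt{p}\,n_m^{-1/(p+1)})$ after optimizing $\varepsilon$. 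This yields $\mathcal{L}_{\mathcal{D}_m}(h^{(1)}_{\mathcal{S}_m})\leq 2\,\mathcal{L}_{\mathcal{D}_m}(h_m^{*})+c\,L\,\sqrt{p}\,n_m^{-1/(p+1)}$; substituting the expression for $L$, taking expectation over $\mathcal{S}$ (re-using the bound on $\E[\mathcal{L}_{\mathcal{D}_m}(h_{\mathcal{S}})-\mathcal{L}_{\mathcal{D}_m}(h_m^{*})]$ from the previous paragraph inside the $\gamma_2$ contribution), multiplying by $\lambda_m$, and collecting terms gives the $c_2$ and $c_5$ lines of~\eqref{eq:bound} and, adding the $2\lambda_m\mathcal{L}_{\mathcal{D}_m}(h_m^{*})$ from the Cover--Hart doubling to $(1-\lambda_m)\mathcal{L}_{\mathcal{D}_m}(h_m^{*})$, the $(1+\lambda_m)$ coefficient.

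\textbf{Main obstacle.} The delicate point is that the representation $\phi_{h_{\mathcal{S}}}$ is learned from $\mathcal{S}\supseteq\mathcal{S}_m$, so it is not independent of the datastore, whereas the covering-number step wants $\phi_{h_{\mathcal{S}}}(\vec{x}^{(i)}_m)$ to be i.i.d.\ draws from a fixed push-forward measure; moreover the Lipschitz constant $L$ is itself a random function of $\mathcal{S}$. I would handle this either by conditioning on the trained global model (taking the mild, and in practice realistic, view that the representation is obtained from data independent of the local datastore), or by replacing the fixed-$\phi$ $1$-NN bound by one uniform over $\{\phi_h:h\in\mathcal{H}\}$, which costs an extra $\sqrt{d_{\mathcal{H}}/n}$-type factor of the kind already appearing in~\eqref{eq:bound}; one also has to make sure the $n_m^{-1/(p+1)}$ covering rate is stated for an arbitrary (possibly degenerate, low-effective-dimension) push-forward of $\mathcal{D}_m$ onto $[0,1]^p$, which the standard proof does accommodate.
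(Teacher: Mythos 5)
Your proposal is correct and follows essentially the same route as the paper: the pointwise convexity decomposition of $\mathcal{L}_{\mathcal{D}_m}(h_{m,\lambda_m})$, the global-model term bounded via two applications of $\disc_{\mathcal{H}}(\bar{\mathcal{D}},\mathcal{D}_m)$ plus a growth-function/VC uniform-deviation bound for the ERM over $\bar{\mathcal{D}}$ (the paper's Lemma~\ref{lem:global_generalization_bound}), and the $1$-NN term bounded by a Cover--Hart-style argument with Lipschitz constant $\gamma_1+\gamma_2(\mathcal{L}_{\mathcal{D}_m}(h_{\mathcal{S}})-\mathcal{L}_{\mathcal{D}_m}(h_m^*))$ and the $[0,1]^p$ box-covering rate $O(\sqrt{p}\,n_m^{-1/(p+1)})$ adapted from \citep[Thm~19.3]{shalev2014understanding} (the paper's Lemma~\ref{lem:local_generalization_bound}), recombined exactly as you describe. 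The ``main obstacle'' you identify is resolved in the paper precisely by the first option you mention: the appendix splits each local dataset $\mathcal{S}_m$ into two disjoint halves of equal size, one used (federatedly) to train the base model and hence the representation, the other to populate the datastore, so the datastore samples are independent of $\phi_{h_{\mathcal{S}'}}$ and the fixed-representation $1$-NN analysis applies conditionally on $\mathcal{S}'$.
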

The proof of Theorem~\ref{thm:generalization_bound} is in Appendix~\ref{app:proofs}. Let us consider, for simplicity, the non-agnostic case, i.e, $\mathcal{L}_{\mathcal{D}_{m}}\left(h_{m}^{*}\right)=0$. We observe that, when clients  only use the global model ($\lambda_{m}=0$), our generalization bound is analogous to the probabilistic bound in  \citep[Eq.~(2)]{mansour2020three}. In particular, if data is i.i.d.~distributed across the clients ($\disc_{\mathcal{H}}\left(\bar{\mathcal{D}}, \mathcal{D}_{m}\right)=0$), the difference between the expected losses of the learned model and  the optimal one decreases with rate $\tilde{\mathcal O}\left(\sqrt{\frac{d_{\mathcal H}}{n}}\right)$. Instead, when each client only uses the kNN model ($\lambda_{m}=1$),\footnote{Note that the kNN model still relies on the representation provided by the global model.} we recover the kNN generalization bound in \citep[Thm~19.3]{shalev2014understanding}.

The bound~\eqref{eq:bound} leads to predict that client $m$ should give a larger weight ($\lambda_m > 1/2$) to its kNN model, when  $n_m$ exceeds a given threshold, even when local distributions are identical. The bound contributes then to explain why adding a memorization mechanism on top of a pretrained model can improve performance, as observed in \citep{khandelwal2019generalization} and  \citep{khandelwal2020nearest}.
While it is difficult to quantify the threshold analytically (also because the constants involved depend on $\gamma_1$ and $\gamma_2$ in Assumption~\ref{assum:lipschitz}), our experiments in Sec.~\ref{sec:experiment} show that even clients with a few tens of samples weigh more the kNN model than the global one.

\section{Experimental Setup}
\label{sec:experimental_setup}
\begin{table*}[t]
    \caption{Datasets and models.}
    \label{tab:datasets_models} 
    \vskip 0.15in
    \begin{center}
    \begin{small}
    \begin{sc}
    \begin{tabular}{ l  l  r  r  l}
    \toprule
        \textbf{Dataset} & \textbf{Task}  &  \textbf{Clients} & \textbf{Total samples} & \textbf{Model} \\
         \midrule
        FEMNIST & Handwritten character recognition & $3,550$ & $805,263$ & MobileNet-v2 \\
         CIFAR-10  & Image classification  & $200$ & $60,000$ & MobileNet-v2  \\
         CIFAR-100  & Image classification  & $200$ & $60,000$ & MobileNet-v2   \\
         Shakespeare  & Next-Character Prediction  & $778$  & $4,226,158$ & Stacked-LSTM \\
    \bottomrule
    \end{tabular}%
    \end{sc}
    \end{small}
    \end{center}
    \vskip -0.1in
\end{table*}

We evaluate \knnper{} on four federated datasets spanning a wide range of machine learning tasks: language modeling (Shakespeare \citep{caldas2018leaf, mcmahan2017communication}), image classification (CIFAR-10 and CIFAR-100 \citep{Krizhevsky09learningmultiple}), handwritten character recognition (FEMNIST \citep{caldas2018leaf}). Unless otherwise said, \knnper{}'s global model $h_{\mathcal{S}}$ is trained by all clients through \FedAvg. Code is available at \url{https://github.com/omarfoq/knn-per}.


\textbf{Datasets.} For Shakespeare and FEMNIST datasets there is a natural way to partition data through clients (by character and by writer, respectively).
We relied on common approaches in the literature to sample heterogenous local datasets from CIFAR-10 and CIFAR-100. We created a federated version of CIFAR-10  by randomly partitioning the dataset among  clients using a symmetric Dirichlet distribution, as done in \cite{Wang2020Federated}. In particular, for each label $y$ we sampled a vector $p_y$ from a Dirichlet distribution of order $M=200$ and parameter $\alpha=0.3$ (unless otherwise specified) and allocated to client $m$ a $p_{y,m}$ fraction of all training instances of class $y$. The approach ensures that the number of data points and label distributions are unbalanced across clients. For CIFAR-100, we exploit the availability  of ``coarse" and ``fine" label structure, to partition the dataset  using pachinko allocation method \citep{li2006pachinko} as in \citep{reddi2021adaptive}. The method generates local datasets with heterogeneous distributions by combining a per-client Dirichlet distribution with parameter $\alpha=0.3$ (unless otherwise specified) over the coarse labels and a per-coarse-label Dirichlet distribution with parameter $\beta=10$ over the corresponding fine labels. 
We also partitioned CIFAR-10 and CIFAR-100 
in a different way
following \citep{achituve2021personalized}: each  client has only samples from two and ten classes for CIFAR-10 and CIFAR-100, respectively.
We refer to the resulting datasets as CIFAR-10 (v2) and  CIFAR-100 (v2).
For FEMNIST and Shakespeare, we randomly split each local dataset into training ($60\%$), validation ($20\%$), and  test ($20\%$) sets. For CIFAR-10 and CIFAR-100, we maintained the original training/test data split and used $20\%$ of the training dataset as validation dataset. Table~\ref{tab:datasets_models} summarizes datasets, models and number of clients.

\textbf{Models and representations.} For CIFAR-100, CIFAR-10, and FEMNIST, we used MobileNet-v2 \citep{sandler2018mobilenetv2} as a base model with the output of the last hidden layer---a $1280$-dimensional vector---as representation. For Shakespeare, the base model was a stacked LSTM model with two layers, each of them with $256$ units; a $1024$-dimensional representation was obtained by concatenating the hidden states and the cell states. 

\textbf{Benchmarks.} We compared \knnper{} with locally trained models (with no collaboration across clients) and  \FedAvg~\citep{mcmahan2017communication}, as well as with one method for each of the personalization approaches described in Sec.~\ref{sec:related}, namely, 
\FedAvg+~\citep{jiang2019improving},\footnote{
    We also implemented the more sophisticated first-order MAML approach from~\citep{fallah2020personalized}, but had worse performance than \FedAvg+.}
\texttt{ClusteredFL}~\citep{sattler2020clustered},
\texttt{Ditto}~\citep{li2021ditto},
\texttt{FedRep}~\citep{collins2021exploiting}, \texttt{APFL}~\citep{deng2020adaptive}, and \pFedGP{}~\citep{achituve2021personalized}.\footnote{
    We were able to run the official \pFedGP{}'s code (\url{https://github.com/IdanAchituve/pFedGP}) only on datasets partitioned as in~\citep{achituve2021personalized}. 
}
For each method, and each dataset, we tuned the learning rate via grid search on the values $\left\{10^{-0.5}, 10^{-1}, 10^{-1.5}, 10^{-2}, 10^{-2.5}\right\}$.   \texttt{FedPer}'s learning rate for network heads' training was separately tuned on the same grid. \texttt{Ditto}'s penalization parameter $\lambda_{m}$ was selected among the values $\left\{10^{1}, 10^{0}, 10^{-1}, 10^{-2}\right\}$ on a per-client basis.  For \texttt{ClusteredFL}, we used the same values of tolerance specified in its official implementation \citep{sattler2020clustered}. We found tuning \texttt{tol1} and \texttt{tol2} particularly hard: no empirical rule is provided in \citep{sattler2020clustered}, and the few random settings we tried did not show any improvement in comparison to the default ones. For \texttt{APFL}, the mixing parameter $\alpha$ was tuned via grid search on the grid $\left\{0.1, 0.3, 0.5, 0.7, 0.9\right\}$.
For \pFedGP{}, we used the same hyperparameters as in \citep{achituve2021personalized}. 
The parameter~$\lambda_{m}$ of \knnper{} was tuned for each client via grid search on the grid $\left\{0.0, 0.1, 0.3, 0.5, 0.7, 0.9, 1.0\right\}$, and the number of neighbours was set to $k=10$.
Once the optimal hyperparameters' values were selected, models were retrained on the concatenation of training and validation sets.


\begin{table*}[t]
    \caption{Test accuracy: average across clients\,/\,bottom decile.}
    \label{tab:results_summary}
    \begin{center}
    \begin{small}
    \begin{sc}
    \resizebox{\textwidth}{!} 
    { 
    \begin{tabular}{l c c c c c c c c c}
        \toprule
        \multirow{2}{*}{Dataset} & \multirow{2}{*}{Local} & \multirow{2}{*}{\FedAvg}  & \multirow{2}{*}{\FedAvg+}  & \multirow{2}{*}{\texttt{ClusteredFL}} & \multirow{2}{*}{\texttt{Ditto}}  & \multirow{2}{*}{\texttt{FedRep}} & \multirow{2}{*}{\texttt{APFL}} & \multirow{2}{*}{\pFedGP{}} & \texttt{\knnper}
        \\
        & & & & &  & & & & (Ours)
        \\
        \midrule
            FEMNIST & $71.0 \,/\, 57.5$ & $83.4 \,/\, 68.9$ & $84.3 \,/\, 69.4$ & $83.7 \,/\, 69.4$ & $84.3 \,/\, 71.3$ &  $85.3 \,/\, 72.7$ & $84.1 \,/\, 69.4$ & $ - \,/\, -$ & $\mathbf{88.2} \,/\, \mathbf{78.8}$ 
            \\
            CIFAR-10  & $57.6 \,/\, 41.1$ & $72.8 \,/\, 59.6$ & $75.2 \,/\, 62.3$ & $73.3 \,/\, 61.5$ &  $80.0 \,/\, 66.5$  & $77.7 \,/\, 65.2$  & $78.9 \,/\, 68.1$ & $ - \,/\, -$ & $\mathbf{83.0} \,/\, \mathbf{71.4}$
            \\
            CIFAR-10~(v2) & $82.4 \,/\, 71.3$ & $67.9 \,/\, 60.1$ & $85.0 \,/\, 79.6$ & $79.9 \,/\, 72.3$ &  $86.3 \,/\, 80.6$ & $89.1 \,/\, 85.3$  & $82.6 \,/\, 76.4$ & $ 88.9 \,/\, 84.1$ & $\mathbf{93.8} \,/\, \mathbf{88.2}$
            \\
            CIFAR-100 & $31.5 \,/\, 19.8$ & $47.4 \,/\, 36.0$ & $51.4 \,/\, 41.1$ & $47.2 \,/\, 36.2$  & $52.0 \,/\, 41.4$ & $53.2 \,/\, 41.7$ & $51.7 \,/\, 41.1$ & $ - \,/\, -$ & $\mathbf{55.0} \,/\, \mathbf{43.6}$
            \\
            CIFAR-100~(v2) & $45.7 \,/\, 38.2$ & $ 42.3\,/\, 34.8$ & $48.1 \,/\ 41.9 $ & $43.5 \,/\, 37.2$ &  $48.7 \,/\, 40.3$ & $70.1 \,/\, 65.2$  & $48.3 \,/\, 42.1$ & $ 61.1 \,/\, 50.0$ & $\mathbf{74.6} \,/\, \mathbf{67.3}$
            \\
            Shakespeare & $32.0 \,/\, 16.0$ & $48.1 \,/\, 43.1$ & $47.0 \,/\, 42.2$ & $46.7 \,/\, 41.4$ & $47.9 \,/\, 42.6$ & $47.2 \,/\, 42.3$ & $45.9 \,/\, 42.4$ & $ - \,/\, -$ & $\mathbf{51.4} \,/\, \mathbf{45.4}$
            \\
        \bottomrule
    \end{tabular}
    }
    \end{sc}
    \end{small}
    \end{center}
\end{table*}

\begin{table*}[t]
    \caption{Average test accuracy across clients unseen at training  (train accuracy between parentheses).}
    \label{tab:results_summary_unsen_clients}
    \begin{center}
    \resizebox{\textwidth}{!} 
    { 
    \begin{tabular}{l  c c c c c c c c }
        \toprule
        \multirow{2}{*}{Dataset}  & \multirow{2}{*}{\FedAvg}  & \multirow{2}{*}{\FedAvg+}  & \multirow{2}{*}{\texttt{ClusteredFL}} & \multirow{2}{*}{\texttt{Ditto}}  & \multirow{2}{*}{\texttt{FedRep}} & \multirow{2}{*}{\texttt{APFL}}& \multirow{2}{*}{\pFedGP{}} &   \knnper
        \\
        & & & &  & &  & & (Ours)
        \\
        \midrule
            FEMNIST  & $83.1$ ($83.3$) & $84.2$ ($88.5$) & $83.2$ ($86.0$) & $83.9$ ($86.9$) &  $85.4$ ($88.9$) & $84.2$ ($85.5$) & $-$ & $\mathbf{88.1}$ ($90.5$) 
            \\
            CIFAR-10  & $72.9$ ($72.8$) & $75.3$ ($78.2$)  & $73.9$ ($76.2$) & $79.7$ ($84.3$) &  $76.4$ ($79.5$)  & $79.2$ ($80.6$) & $-$ & $\mathbf{82.4}$ ($87.1$)
            \\
            CIFAR-10 (v2) &  $67.5$ ($68.1$) & $85.1$ ($85.0$)  & $79.6$ ($79.9$) & $85.9$ ($86.0$) &  $89.0$ ($89.1$) & $82.3$ ($82.5$) & $89.0$ ($88.8$) & $\mathbf{93.0}$ ($93.1$)
            \\
            CIFAR-100 & $47.1$ ($47.5$) & $50.8$ ($53.4$) & $47.1$ ($48.2$) & $52.1$ ($57.3$) &  $53.5$ ($58.2$) & $49.1$ ($52.7$) & $-$ & $\mathbf{56.1}$ ($59.3$)
            \\
            CIFAR-100 (v2) &  $42.1$ ($42.2$) & $47.9$ ($48.1$)  & $43.2$ ($43.4$) & $48.8$ ($48.5$) &  $69.8$ ($70.0$) & $48.2$ ($48.4$) & $61.3$ ($61.0$) & $\mathbf{74.3}$ ($74.5$)
            \\
            Shakespeare & $49.0$ ($48.3$) & $49.3$ ($48.1$) & $49.4$ ($46.7$) & $48.1$ ($49.2$) &  $48.7$ ($47.8$) & $46.1$ ($52.7$) & $-$ & $\mathbf{50.7}$ ($64.2$)
            \\
        \bottomrule
    \end{tabular}
    }
    \end{center}
\end{table*}

\begin{figure*}[t]
    \centering
    \begin{subfigure}[b]{0.35\textwidth}
        \centering 
        \includegraphics[width=\textwidth, height=0.8\textwidth]{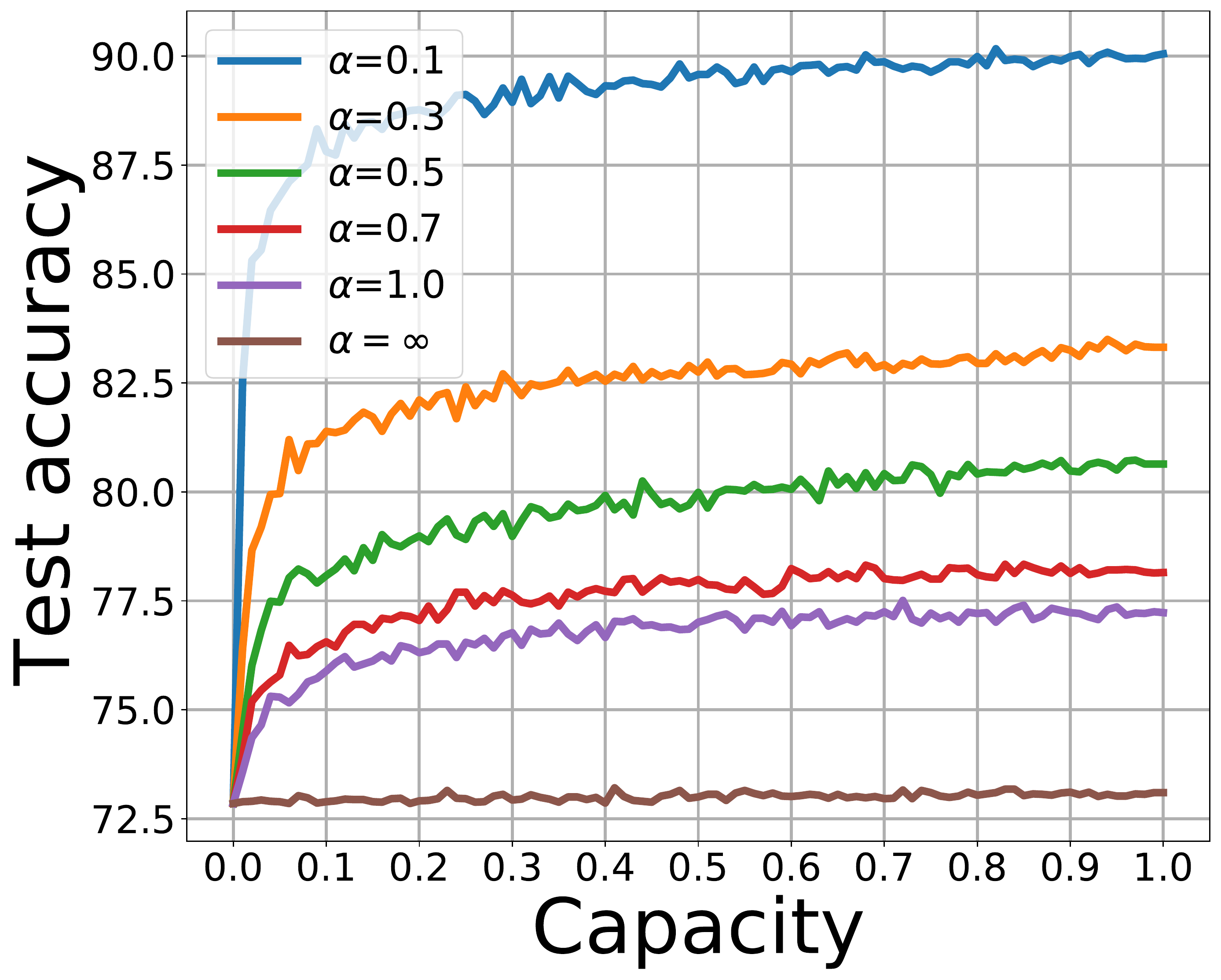}
        \subcaption[]{\small CIFAR-10}
    \end{subfigure}
    \hspace{0.1\textwidth}
    \begin{subfigure}[b]{0.35\textwidth}
        \centering
        \includegraphics[width=\textwidth, height=0.8\textwidth]{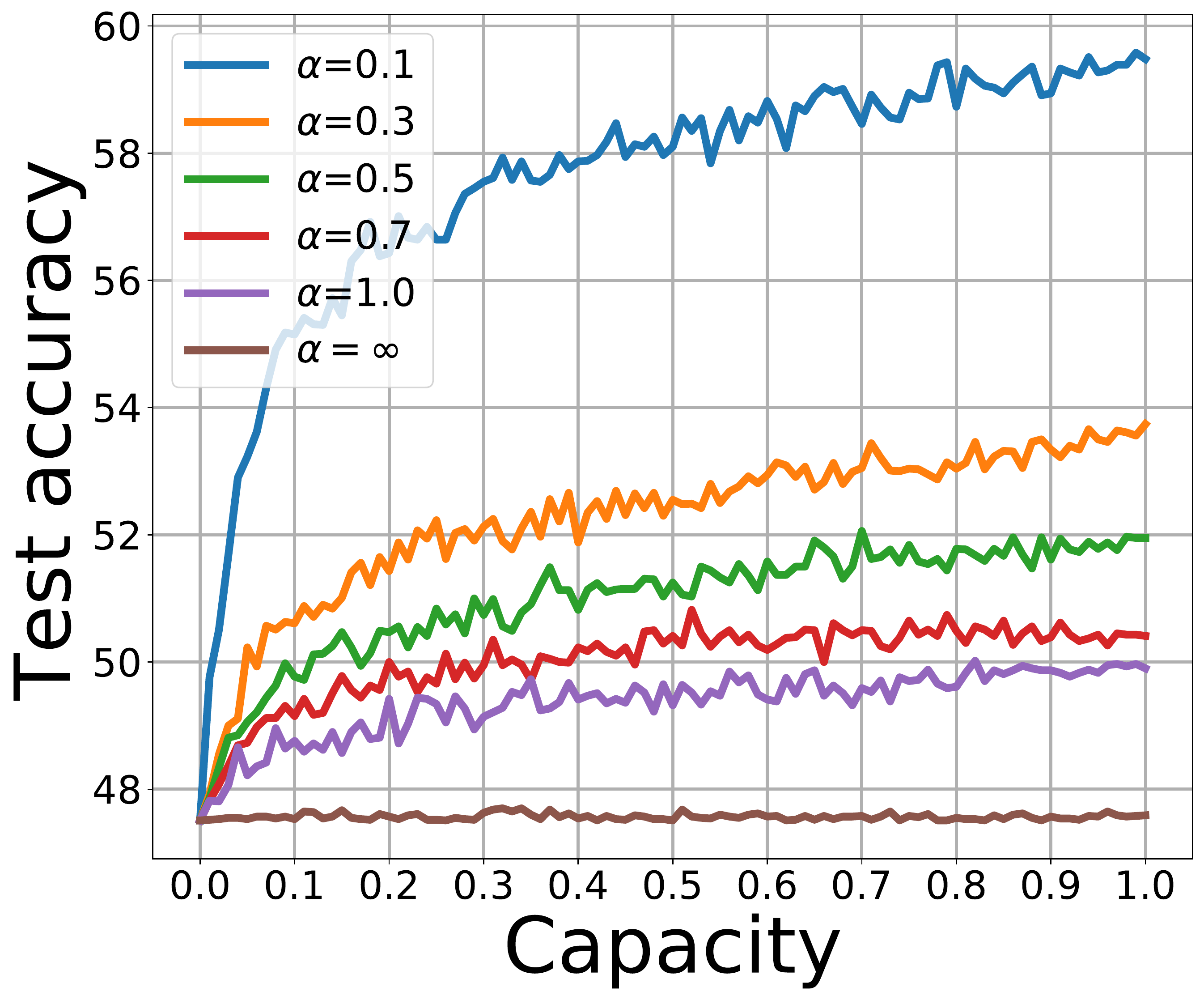}
        \subcaption[]{\small CIFAR-100}
    \end{subfigure}
    \hfill
    \caption[]
    {\small Test accuracy vs capacity (local datastore size). The capacity is normalized with respect to the initial size of the client's dataset partition. Smaller values of $\alpha$ correspond to more heterogeneous data distributions across clients.} 
    \label{f:capacity_effect}
\end{figure*}

\textbf{Training details.} In all experiments with CIFAR-10 and CIFAR-100, training spanned $200$ rounds with full clients' participation at each round for all methods. The learning rate was reduced by a factor $10$ at round $100$ and then again at round $150$. For Shakespeare, $10\%$ of clients were sampled uniformly at random without replacement at each round, and we trained for $300$ rounds with a constant learning rate.
For FEMNIST, $5\%$ of the clients participated at each round for a total $1000$ rounds, with the learning rate dropping by a factor $10$ at round $500$ and $750$. In all our experiments we employed the following aggregation scheme
\begin{equation}
    \vec{w}_{t+1} = \sum_{m \notin \mathbb{S}_{t}}\frac{n_m}{n} \vec{w}_{t} + \sum_{m\in \mathbb{S}_{t}}\frac{n_m}{n} \vec{w}_{t}^{m},
\end{equation}
where $\vec{w}_{t}$, $\vec{w}_{t}^{m}$, and $\mathbb{S}_{t}$  denote, respectively, the global model, the updated model at client $m$, and the set of clients participating to training at round $t$. 
 
In all our experiments, local hypotheses follow Eq.~\eqref{eq:gaussian_kernel} with $d(\cdot)$ being the Euclidean distance. kNN retrieval relied on FAISS library~\citep{faiss}.

\section{Experiments}
\label{sec:experiment}

\begin{figure*}[t]
    \centering
    \begin{subfigure}[b]{0.35\textwidth}  
        \centering 
        \includegraphics[width=\textwidth, keepaspectratio]{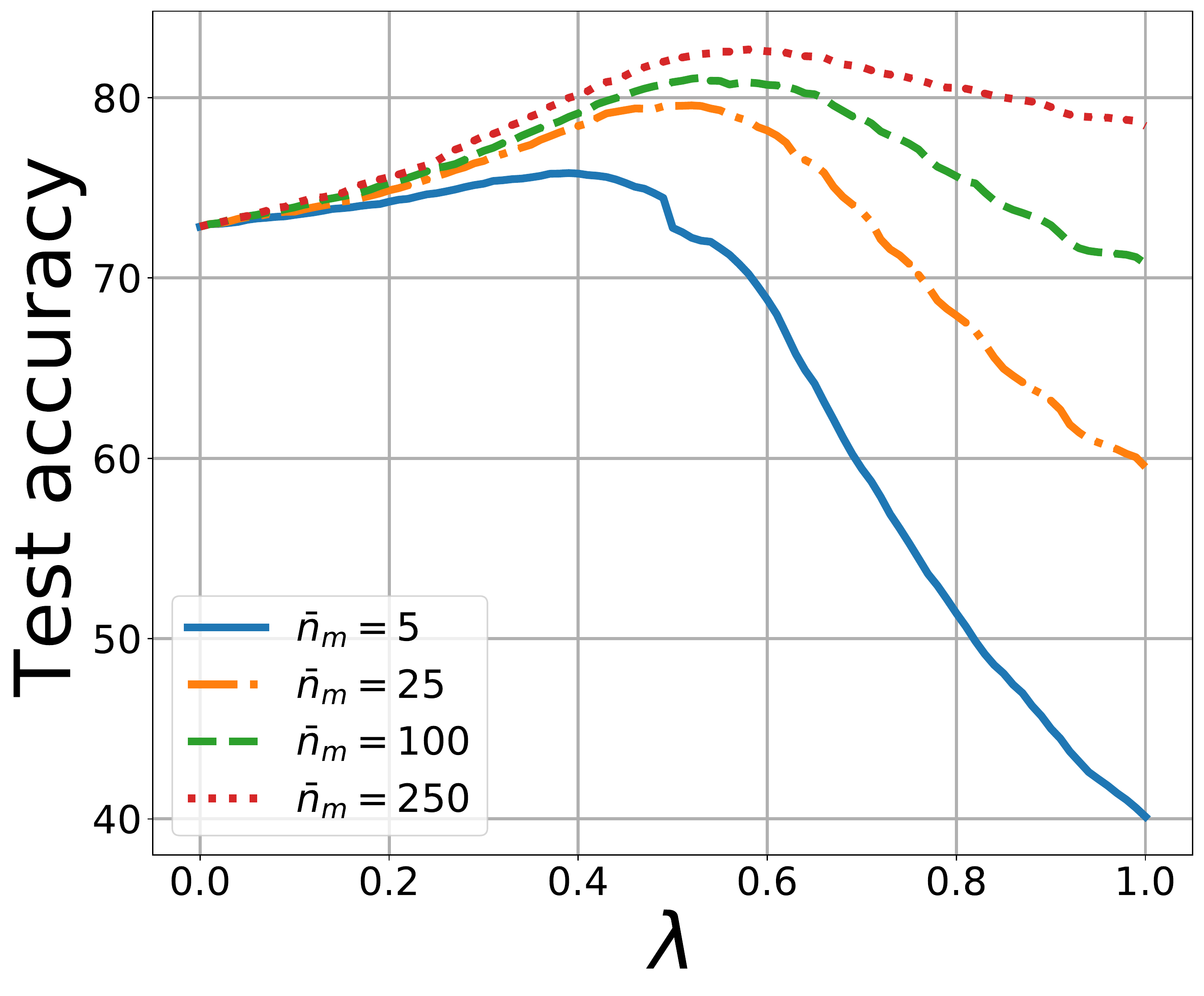}
        \subcaption[]{\small CIFAR-10}
    \end{subfigure}
    \hspace{0.1\textwidth}
    \begin{subfigure}[b]{0.35\textwidth}
        \centering
        \includegraphics[width=\textwidth, keepaspectratio]{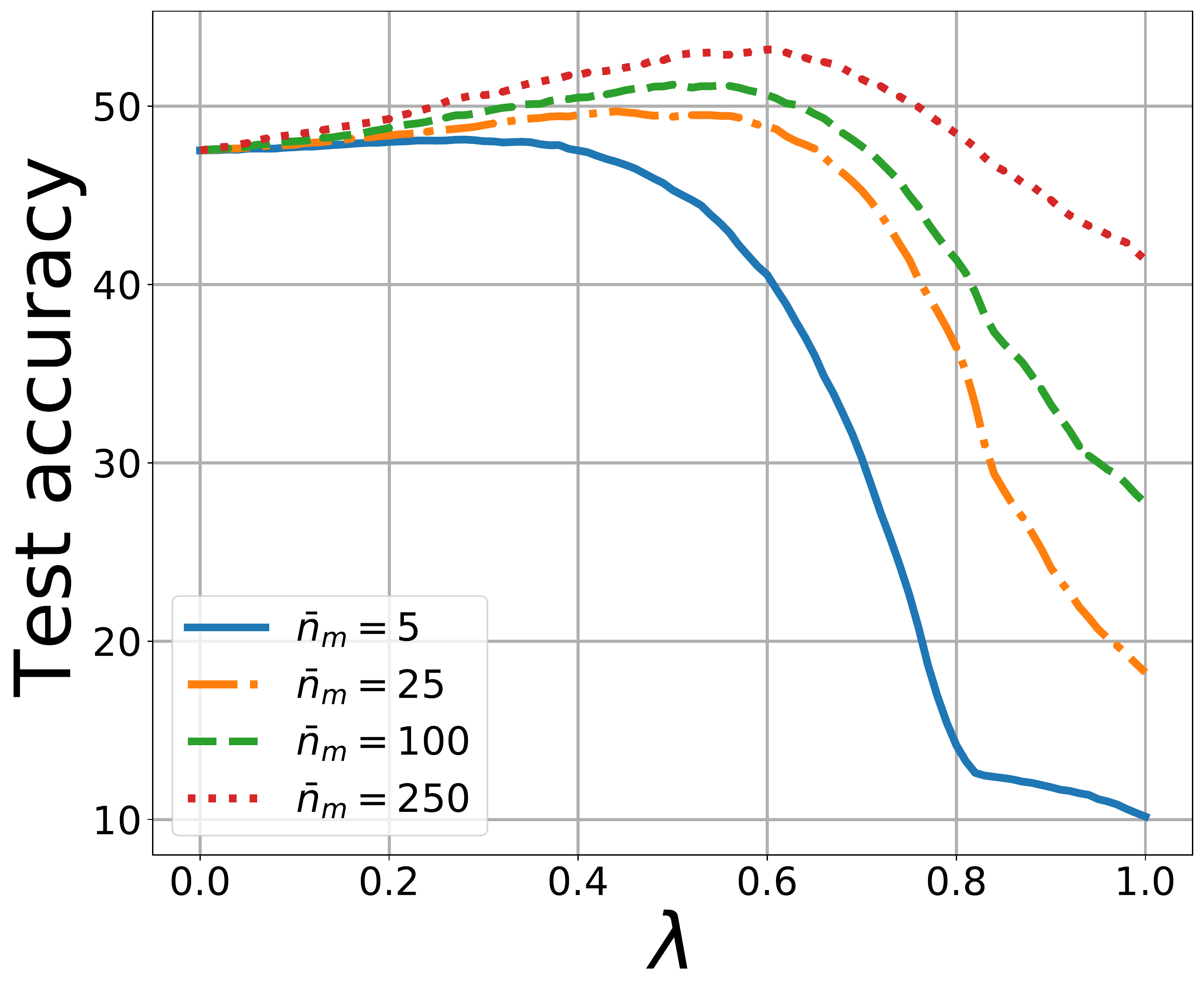}
        \subcaption[]{\small CIFAR-100}
    \end{subfigure}
    \hfill
    \caption[]
    {\small Test accuracy vs the interpolation parameter $\lambda$ (shared across clients) for different average local dataset sizes. For $\lambda=1$ (resp.~$\lambda=0$) the client uses only the kNN model (resp.~the global model).} 
    \label{f:lambda_effect}
\end{figure*}

\textbf{Average performance of personalized models.} The performance of each personalized  model (which coincides with the global one in the case of \FedAvg) is evaluated on the local test dataset (unseen at training). Table~\ref{tab:results_summary} shows the average weighted accuracy with weights proportional to local dataset sizes. \knnper{} consistently achieves the highest accuracy across all datasets. We observe that \texttt{Local} performs much worse than any other FL method as expected (e.g., 25 pp w.r.t.~\knnper{} or 22 pp w.r.t.
~to \Ditto{} on CIFAR-10). 
\texttt{Local} outperforms some other FL methods on CIFAR-10/100 (v2). This splitting was proposed in pFedGP's paper---where the same result is observed \citep[Table~1]{achituve2021personalized}. This occurs because each client only receives samples for a few classes, and then its local task is much easier than the global one.

\textbf{Fairness across clients.} Table~\ref{tab:results_summary} also shows the bottom decile of the accuracy of personalized models, i.e., the ($M/10$)-th worst accuracy (the minimum accuracy is particularly noisy, notably because some local test datasets are very small). We observe that even clients with the worst personalized models are still better off when \knnper{} is used for training. 

\textbf{Generalization to unseen clients.}  An advantage of \knnper{} is that a ``new'' client arriving after training may easily learn a personalized model: it may simply retrieve the global model (whose training it did not participate to) from the orchestrator and use it to build the local datastore for kNN.
Even if this scenario was not explicitly considered in their original papers, other personalized FL methods can also be adapted to new clients as follows. 
\FedAvg{}$+$ personalizes the global model through stochastic gradient updates on the new client's local dataset. \texttt{Ditto} operates similarly, but maintains a penalization term proportional to the distance between the personalized model and the global model. \texttt{FedRep} trains the  network head using the local dataset, while freezing the body as in the global model.
For \pFedGP{} new clients inherit the previously trained shared network and compute their local kernel.
\texttt{ClusteredFL} assigns the new client  to one learned cluster model using a held-out validation set. In the case of \FedAvg{}, there is no personalization and the new client uses directly the global model.
We performed an experiment where only $80\%$ of the clients participated to the training and the remaining $20\%$ joined later. Results in Table~\ref{tab:results_summary_unsen_clients} show that, despite its simplicity in dealing with new clients, \knnper{} still outperforms all other methods.

\textbf{Effect of local dataset's size.} Beside its relevance for some practical scenarios, the distinction between old and new clients also helps us to evaluate how different factors contribute to the final performance of \knnper. For example,  to understand how the size of the local dataset affects performance, we reduced proportionally the size of new clients' local datasets, while maintaining unchanged the global model, which was trained on old clients. \cref{f:capacity_effect} shows that new clients still reap most of \knnper's benefits   even if their local datastore is reduced by a factor 3. Note that if we had changed the local dataset sizes also for old clients, the global model (and then the representation) would have changed too, making it difficult to isolate  the effect of the local datastore size. We show the results for this experiment in \cref{f:hetero_effect_appendix} (\cref{app:experiments}).

\textbf{Effect of data heterogeneity.} 
\cref{f:capacity_effect} also shows that, as expected by Theorem~\ref{thm:generalization_bound}, the benefit of the memorization mechanism is larger when data distributions are more heterogeneous (smaller $\alpha$). While other methods also benefit from higher heterogeneity, \knnper{} appears to address statistical heterogeneity more effectively (\cref{f:hetero_effect_benchmark}).
Note that if local distributions were identical ($\alpha \to \infty$), no personalization method would provide any advantage. 

\textbf{Hyperparameters.} \knnper's performance is not highly sensitive to the value $k$ which can be selected between $7$ and~$14$ for CIFAR-10 and between $5$ and $12$ for CIFAR-100 with less than 0.2 percentage points of accuracy variation (see \cref{f:k_effect} in Appendix~\ref{app:experiments}). Similarly, scaling the Euclidean distance by a factor $\sigma$ has almost no effect for values of $\sigma$ between $0.1$ and $100$ and between $1$ and $100$, respectively for CIFAR-10 and CIFAR-100 (see \cref{f:scale_effect} in Appendix~\ref{app:experiments}). 
The interpolation parameter $\lambda_{m}$  plays a more important role. Experiments in Appendix~\ref{app:experiments} (\cref{f:lambda_opt}) show that, as expected, the larger the local dataset, the more clients rely on the local kNN model. Interestingly, clients give a larger weight to the kNN model  than to the global one ($\lambda > 1/2$)
 for datasets with just one hundred samples (\cref{f:lambda_effect}).

\textbf{Effect of global model's quality.} Assumption~\ref{assum:lipschitz} stipulates that the smaller the expected loss of the global model, the better representations' distances capture the variability of $\vec{x} \mapsto \mathcal{D}_{m}\left(
\cdot|\vec{x}\right)$ and then the more accurate the kNN model.
This effect is quantified by Lemma~\ref{lem:local_generalization_bound}, where the loss of the local memorization mechanism is upper bounded by a term that depends linearly on the loss of the global model. In order to validate this assumption, we study the relation between the test accuracies of the global model and \texttt{kNN-Per}. In particular, we trained a global model for CIFAR-10, in a centralized way, and we save the weights at different stages of the training, leading to global models with different accuracy.
\cref{f:base_model_effect_main} shows the test accuracy of \knnper{} with $\lambda=1$ (i.e., when only the kNN predictor is used) as a function of the global model's test accuracy for different levels of heterogeneity. We observe that, quite unexpectedly, the relation between the two accuracies is almost linear. Additional experiments (also including CIFAR-100) in Appendix~\ref{app:experiments} confirm these findings.

\begin{figure}
        \centering
        \includegraphics[width=0.35\textwidth]{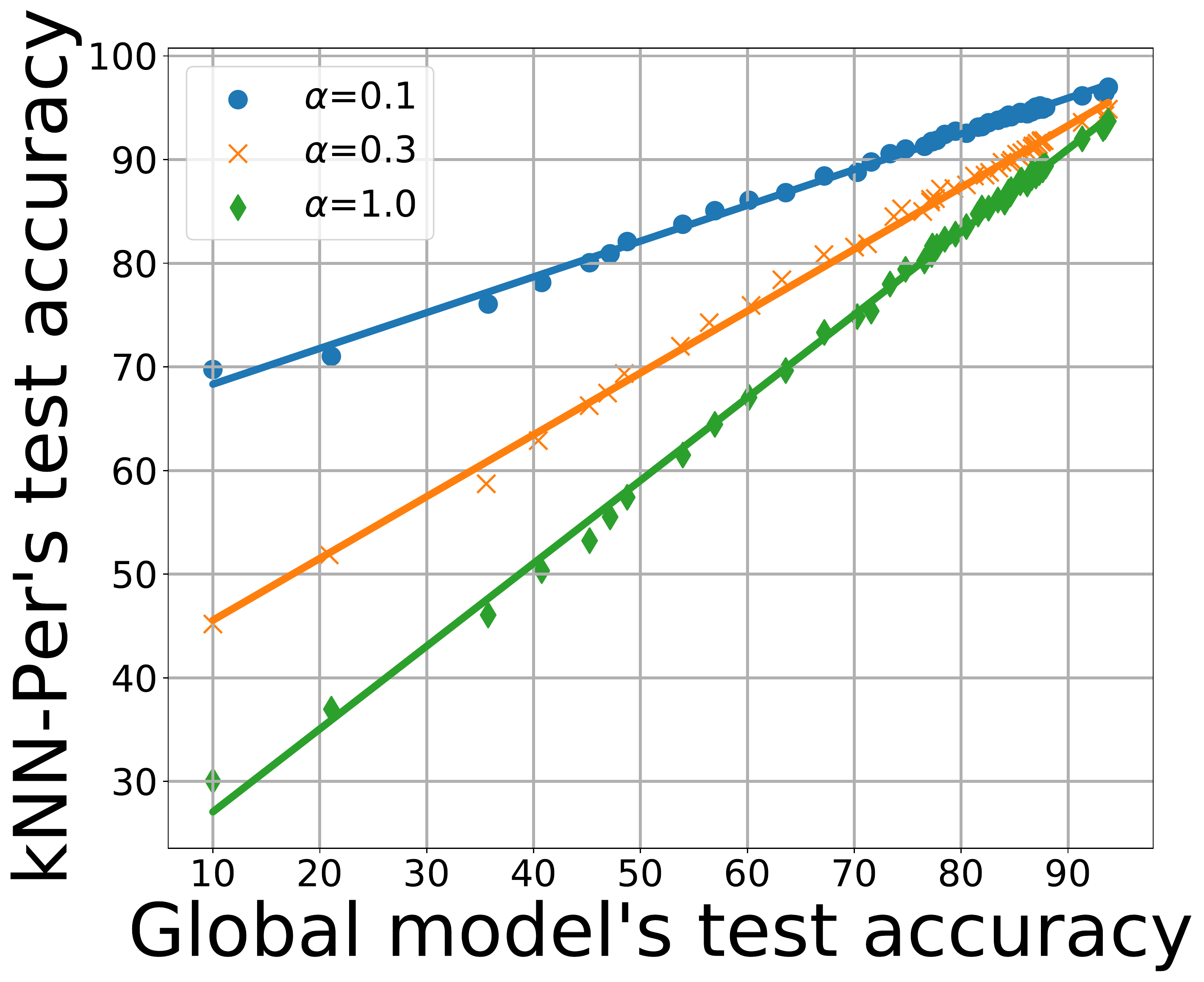}
        \caption[]{Effect of the global model's quality on the test accuracy of \texttt{kNN-Per} with $\lambda=1$. CIFAR-10.}
        \label{f:base_model_effect_main}
\end{figure}

\paragraph{Robustness to distribution shift.}  
\knnper{} offers a simple and effective way to address statistical heterogeneity in a dynamic environment where  client's data distributions change after training.
We simulate such a dynamic environment as follows.
Client $m$ initially has a datastore with instances sampled from data distribution $\mathcal{D}_m$. 
At each time step $t<t_{0}$, client $m$ receives a batch of  instances drawn from $\mathcal{D}_m$. 
At time step $t_{0}$, a data distribution shift takes place, i.e., for $ t_{0} \le t \le T$, client  $m$ receives instances drawn from a data distribution $\mathcal{D}_m'\neq \mathcal{D}_m$. 
Upon receiving new instances, client $m$ may use those instances to update its datastore. We consider $3$ different strategies: (1) \emph{first-in-first-out} (FIFO) where, at time step $t$, new instances replace the oldest ones;
(2) \emph{concatenate}, where the new samples are simply added to the datastore;
(3) \emph{fixed datastore}, where the datastore is not updated at all. 
Figure~\ref{f:stream_main} shows the evaluation of the test accuracy across time.
If  clients do not update their datastores, there is a significant drop in accuracy as soon as the distribution changes at $t_0 = 50$.
Under FIFO, we observe some random fluctuations for the accuracy for $t< t_0$, as repository changes affect the kNN predictions. While accuracy inevitably drops for $t=t_0$, it then increases as datastores are progressively populated by instances from the new distributions. 
Under the ``concatenate''  strategy, results are similar, but 1)~accuracy increases for $t < t_0$ as the quality of kNN predictors  improves for larger datastores, 2) accuracy increases also for $t > t_0$, but at a slower pace than what observed under FIFO, as samples from the old distribution are never evicted. Experiments' details and results for CIFAR-100 are in Appendix~\ref{app:experiments}.

\begin{figure}[t]
        \centering
        \includegraphics[width=0.35\textwidth]{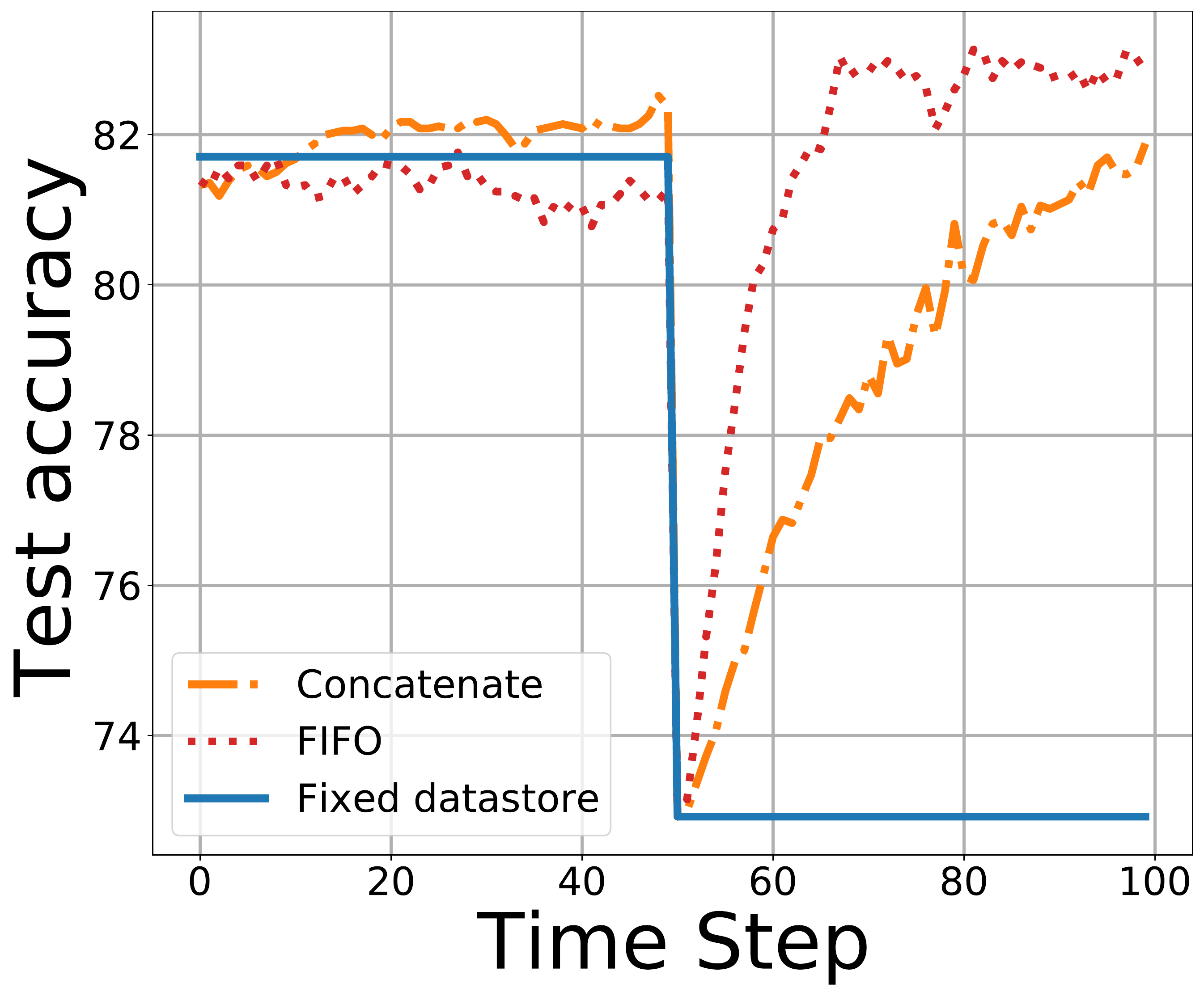}
        \caption[]{Effect of the global model's quality on the test accuracy of \texttt{kNN-Per} with $\lambda=1$. CIFAR-10.}
        \label{f:stream_main}
\end{figure}

Appendix~\ref{app:experiments} also includes experiments to evaluate  the effect of system heterogeneity and the possibility to use aggressive nearest neighbours compression techniques like \texttt{ProtoNN}~\cite{gupta17protonn}.

\section{Conclusion}
\label{sec:conclusion}
    In this paper, we showed that local memorization at each client is a simple and effective way to address statistical heterogeneity in federated learning. In particular, while a global model trained with classic FL techniques, like \FedAvg, may not deliver accurate predictions at each client, it may still provide a good representation of the input, which can be advantageously used by a local kNN model. This finding suggests that  combining memorization techniques with neural networks  has additional benefits other than those highlighted in the seminal papers~\citep{grefenstette15,joulin15} and the recent applications to natural language processing~\citep{khandelwal2019generalization,khandelwal2020nearest}.
    
    The better performance of \knnper{} in comparison to  \texttt{FedRep} and \pFedGP{} show that jointly learning the shared representation and the local models (as \texttt{FedRep} and \pFedGP{} do) may lead to potentially conflicting and interfering goals, but further study is required to  understand this interaction. Semi-parametric learning~\cite{bickel93} could be the right framework to formalize this problem, but its extension to a federated setting is still unexplored.
    

\section*{Acknowledgments}
This work has been supported by the French government, through the 3IA C\^ote d’Azur Investments in the Future project managed by the National Research Agency (ANR) with the reference number ANR-19-P3IA-0002. The authors are grateful to the OPAL infrastructure from Universit\'e C\^ote d’Azur for providing computational resources and technical support.

\bibliographystyle{unsrtnat}
\bibliography{references.bib}

\newpage

\onecolumn

\appendix

\section{Proofs}
\label{app:proofs}
In the general description of \knnper, and in our experiments, we considered that each client $m\in [M]$ uses its whole dataset $\sS_m$ both to train the base shared model $h_\sS$---and the corresponding representation function $\phi_{h_\sS}$---and to populate the local datastore. 

In the analysis, for simplicity, we deviate by this operation and consider that each local dataset $\sS_m$ is split in two disjoint parts ($\sS_m=\sS_m' \dot\bigcup  \sS_m''$), with $\sS_m'$ used to train the base model and $\sS_m''$ used to populate the local datastore. Moreover, we assume that the two parts have the same size, i.e., $n_m'= n_m''= n_m/2$ for all $m \in [M]$, where $n_m'$ and $n_m''$ denote the size of $\sS_m'$ and $\sS_m''$, respectively. In general, the result holds if the two parts have a fixed relative size across clients (i.e., $n_{m_1}'/n_{m_1} = n_{m_2}'/n_{m_2}$ for all $m_1$ and $m_2$ in $[m]$).

Let $\sS'$ denote the whole data used to train the base model, i.e., $\sS'= \bigcup_{m \in [M]} \sS'_m$.
We observe that the base model $h_\sS$ is only function of $\sS'$, and then we can write $h_{\sS'}$. Instead, the local model $h_{\sS_m}^{(1)}$ is both a function of $\sS'$ (used to learn the shared representation $\phi_\sS'$) and of $\sS_m''$ (used to populate the datastore). In order to stress such dependence, we then write $h_{\sS_m'', \sS'}^{(1)}$.

\subsection{Proof of Theorem~\ref{thm:generalization_bound}}

\begin{repthm}{thm:generalization_bound}
    Suppose that Assumptions~\ref{assum:bounded_loss}--\ref{assum:lipschitz} hold, and consider $m\in[M]$ and $\lambda_m \in (0,1)$, then there exist constants $c_{1}, c_{2}, c_{3}$, $c_{4}$, and $c_5 \in\mathbb{R}$, such that

    \begin{flalign}
        \label{eq:bound2}
        \E_{\sS\sim \otimes_{m=1}^{M}\mathcal{D}_{m}^{n_{m}}}  \left[\mathcal{L}_{\mathcal{D}_{m}} \left(h_{m, \lambda_m}\right)\right]  \leq  & \left(1 + \lambda_m\right)\cdot \mathcal{L}_{\mathcal{D}_{m}} \left({h}^{*}_m\right)  
        + c_{1}\left(1-\lambda_m\right)  \cdot \left(\disc_{\mathcal{H}}\left(\bar{\mathcal{D}}, \mathcal{D}_{m}\right) +1 \right) \nonumber \\
        & + c_{2}\lambda_m \cdot \frac{\sqrt{p}}{\sqrt[p+1]{n_{m}}} \cdot  \disc_{\mathcal{H}}\left(\bar{\mathcal{D}}, \mathcal{D}_{m}\right)   + c_{3}\left(1-\lambda_m\right) \cdot \sqrt{\frac{d_{\mathcal H}}{n}} \cdot \sqrt{c_{4} + \log\left(\frac{n}{d_{\mathcal H}}\right)} \nonumber \\
        & + c_{5} \lambda_m \cdot \sqrt{\frac{d_{\mathcal H}}{n}}  \cdot \sqrt{c_{4} + \log\left(\frac{n}{d_{\mathcal H}}\right)} \cdot \frac{\sqrt{p}}{\sqrt[p+1]{n_{m}}}, 
    \end{flalign}
    where $d_{\mathcal H}$ is the the VC dimension of the hypothesis class $\mathcal{H}$, $n=\sum_{m=1}^{M}n_{m}$, $\bar{\mathcal{D}} = \sum_{m=1}^{M} \frac{n_{m}}{n} \cdot \mathcal{D}_{m}$, $p$ is the dimension of representations, and $\disc_{H}$ is the label discrepancy associated to the hypothesis class $\mathcal{H}$.
\end{repthm}

\begin{proof}
    The idea of the proof is to bound both the expected error of the \emph{shared} base model (Lemma~\ref{lem:global_generalization_bound})  and the error of the local kNN retrieval mechanism (Lemma~\ref{lem:local_generalization_bound}) before using the convexity of the loss function to bound the error of $h_{m,\lambda_m}$.
    
    Consider $\sS \sim \otimes_{m=1}^{M}\mathcal{D}_{m}^{n_{m}}$ or, equivalently, $\sS = \sS' \cup \sS''$, where 
    $\sS' \sim \otimes_{m=1}^{M}\mathcal{D}_{m}^{n_{m}/2}$,
    and  $\sS'' = \cup_{m \in [M]} \sS_m''$ and $\sS_m''\sim \mathcal{D}_{m}^{n_{m}/2}$.  
    
    For $m\in[M]$, and $\lambda_m \in (0,1)$, we have 
    \begin{equation}
        h_{m, \lambda_m} = \lambda_m \cdot h_{\sS_{m}'',\sS'}^{(1)} + \left(1-\lambda_m\right) \cdot h_{\sS'}. 
    \end{equation}
    From Assumption~\ref{assum:convex_loss} and the linearity of the expectation, it follows 
    \begin{equation}
        \mathcal{L}_{\mathcal{D}_{m}}\left(h_{m,\lambda_m}\right) \leq \lambda_m \cdot  \mathcal{L}_{\mathcal{D}_{m}}\left( h_{\sS_{m}'',\sS'}^{(1)}\right) + \left(1-\lambda_m\right)\cdot \mathcal{L}_{\mathcal{D}_{m}}\left(h_{\sS'}\right). 
    \end{equation}
    Using Lemma~\ref{lem:local_generalization_bound} and Lemma~\ref{lem:global_generalization_bound}, and applying expectation over samples $\sS \sim \otimes_{m=1}^{M}\mathcal{D}_{m}^{n_{m}}$, we have
    \begin{flalign}
         \E_{\sS \sim \otimes_{m=1}^{M}\mathcal{D}_{m}^{n_{m}}} \left[\mathcal{L}_{\mathcal{D}_{m}} \left(h_{m, \lambda_m}\right)\right] 
         & \le \lambda_m \cdot \E_{\sS' \sim \otimes_{m=1}^{M}\mathcal{D}_{m}^{n_{m}/2}} \left[ 
            \E_{\sS'' \sim \otimes_{m=1}^{M}\mathcal{D}_{m}^{n_{m}/2}} \left[ \mathcal{L}_{\mathcal{D}_{m}}\left( h_{\sS_{m}'',\sS'}^{(1)}\right)
         \right]
         \right] \nonumber\\
         & \quad+ (1-\lambda_m)\cdot \E_{\sS' \sim \otimes_{m=1}^{M}\mathcal{D}_{m}^{n_{m}/2}} \left[ 
            \E_{\sS'' \sim \otimes_{m=1}^{M}\mathcal{D}_{m}^{n_{m}/2}} \Big[ \mathcal{L}_{\mathcal{D}_{m}}\left(h_{\sS'}\right)
         \Big]
         \right]\\
         & \le   2 \lambda_m \mathcal{L}_{\mathcal{D}_{m}}\left(h_{m}^{*}\right) 
         +  6 \lambda_m \gamma_{1} \frac{ \sqrt{p}}{\sqrt[p+1]{n_{m}}} \nonumber\\
         & \quad + 6 \lambda_m \gamma_{2} \frac{ \sqrt{p}}{\sqrt[p+1]{n_{m}}}\cdot
        \left( \E_{\sS' \sim \otimes_{m=1}^{M}\mathcal{D}_{m}^{n_{m}/2}} \left[ \mathcal{L}_{\mathcal{D}_{m}}\left(h_{\sS'} \right)\right] - \mathcal{L}_{\mathcal{D}_{m}}\left(h_{m}^{*} \right)\right) \nonumber\\
        & \quad+ (1-\lambda_m)\cdot \E_{\sS' \sim \otimes_{m=1}^{M}\mathcal{D}_{m}^{n_{m}/2}} \left[ 
             \mathcal{L}_{\mathcal{D}_{m}}\left(h_{\sS'}\right)
        \right]\\
        & \le   2 \lambda_m \mathcal{L}_{\mathcal{D}_{m}}\left(h_{m}^{*}\right) 
         +  6 \lambda_m \gamma_{1} \frac{ \sqrt{p}}{\sqrt[p+1]{n_{m}}} \nonumber\\
         & \quad + 6 \lambda_m \gamma_{2} \frac{ \sqrt{p}}{\sqrt[p+1]{n_{m}}}\cdot
        \left( \delta_{1} \cdot \sqrt{\frac{d_{\mathcal H}}{n}}\cdot \sqrt{\delta_{2} +   \log\left(\frac{n}{d_{\mathcal H}} \right) }  + 2 \cdot \disc_{\mathcal{H}}\left(\bar{\mathcal{D}}, \mathcal{D}_{m}\right)\right) \nonumber\\
        & \quad+ (1-\lambda_m)\cdot \left( \mathcal{L}_{\mathcal{D}_{m}}\left(h_{m}^{*} \right) + \delta_{1} \cdot \sqrt{\frac{d_{\mathcal H}}{n}}\cdot \sqrt{\delta_{2} +   \log\left(\frac{n}{d_{\mathcal H}} \right) }  + 2 \cdot \disc_{\mathcal{H}}\left(\bar{\mathcal{D}}, \mathcal{D}_{m}\right)\right)\\
        & =   (1+ \lambda_m) \mathcal{L}_{\mathcal{D}_{m}}\left(h_{m}^{*}\right) 
         +  6 \lambda_m \gamma_{1} \frac{ \sqrt{p}}{\sqrt[p+1]{n_{m}}} \nonumber\\
         & \quad + 
         6 \lambda_m \gamma_{2} \frac{ \sqrt{p}}{\sqrt[p+1]{n_{m}}} \delta_{1} \cdot \sqrt{\frac{d_{\mathcal H}}{n}}\cdot \sqrt{\delta_{2} +   \log\left(\frac{n}{d_{\mathcal H}} \right) }  + 12 \lambda_m \gamma_{2} \frac{ \sqrt{p}}{\sqrt[p+1]{n_{m}}} \cdot \disc_{\mathcal{H}}\left(\bar{\mathcal{D}}, \mathcal{D}_{m}\right) \nonumber\\
        & \quad+   \delta_{1} (1-\lambda_m) \cdot \sqrt{\frac{d_{\mathcal H}}{n}}\cdot \sqrt{\delta_{2} +   \log\left(\frac{n}{d_{\mathcal H}} \right) }  + 2 \cdot (1-\lambda_m) \disc_{\mathcal{H}}\left(\bar{\mathcal{D}}, \mathcal{D}_{m}\right).
    \end{flalign}
    Rearranging the terms and taking $c_{1} \triangleq 2$, $c_{2}\triangleq \max\{12 \gamma_{2},6 \gamma_1\}$, $c_{3}\triangleq \delta_{1}$, $c_{4} \triangleq \delta_{2}$ and $c_{5} \triangleq 6\gamma_{2}\delta_{1}$, the final result follows.
\end{proof}

\subsection{Intermediate Lemmas}
\label{sec:support_lem}

\begin{lem}    
    \label{lem:global_generalization_bound}
    Consider $m\in [M]$, then there exists constants $\delta_{1}, \delta_{2} \in \mathbb{R}$ such that
    \begin{equation}
        \E_{\sS' \sim \otimes_{m=1}^{M}\mathcal{D}_{m}^{n_{m}/2}} \left[\mathcal{L}_{\mathcal{D}_{m}}\left(h_{\sS'} \right)\right] \leq \mathcal{L}_{\mathcal{D}_{m}}\left(h_{m}^{*} \right) + \delta_{1} \cdot \sqrt{\frac{d_{\mathcal H}}{n}}\cdot \sqrt{\delta_{2} +   \log\left(\frac{n}{d_{\mathcal H}} \right) }  + 2 \cdot \disc_{\mathcal{H}}\left(\bar{\mathcal{D}}, \mathcal{D}_{m}\right),
    \end{equation}
    where $d$ is the VC dimension of the hypothesis class $\mathcal{H}$, $\bar{\mathcal{D}} = \sum_{m=1}^{M} \frac{n_{m}}{n} \cdot \mathcal{D}_{m}$ and $\disc_{H}$ is the label discrepancy associated to the hypothesis class $\mathcal{H}$.
\end{lem}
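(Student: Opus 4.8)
The plan is to exploit the fact that $h_{\sS'}$ is the empirical risk minimizer over $\mathcal H$ of the \emph{aggregated} empirical risk $\mathcal{L}_{\sS'}$, which is an unbiased estimator of the risk under the mixture $\bar{\mathcal{D}} = \sum_{m=1}^{M}\frac{n_m}{n}\mathcal{D}_m$ (the total size of $\sS'$ being $n/2$). Hence $h_{\sS'}$ behaves like an agnostic-PAC learner for $\bar{\mathcal{D}}$, and the only additional work is (i) transferring the guarantee from $\bar{\mathcal{D}}$ to the target $\mathcal{D}_m$ via the label discrepancy, and (ii) turning the classical VC uniform-convergence bound into the stated in-expectation estimate.

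First I would apply the discrepancy twice. Since $h_{\sS'}\in\mathcal H$, the definition of $\disc_{\mathcal H}$ gives $\mathcal{L}_{\mathcal{D}_m}(h_{\sS'}) \le \mathcal{L}_{\bar{\mathcal{D}}}(h_{\sS'}) + \disc_{\mathcal H}(\bar{\mathcal{D}},\mathcal{D}_m)$, and likewise $\mathcal{L}_{\bar{\mathcal{D}}}(h_m^*) \le \mathcal{L}_{\mathcal{D}_m}(h_m^*) + \disc_{\mathcal H}(\bar{\mathcal{D}},\mathcal{D}_m)$; these two applications produce the factor $2$ in front of $\disc_{\mathcal H}(\bar{\mathcal{D}},\mathcal{D}_m)$. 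It then suffices to show $\E_{\sS'}[\mathcal{L}_{\bar{\mathcal{D}}}(h_{\sS'})] \le \mathcal{L}_{\bar{\mathcal{D}}}(h_m^*) + \delta_1\sqrt{d_{\mathcal H}/n}\,\sqrt{\delta_2+\log(n/d_{\mathcal H})}$, which I would obtain from the usual decomposition $\mathcal{L}_{\bar{\mathcal{D}}}(h_{\sS'}) - \mathcal{L}_{\bar{\mathcal{D}}}(h_m^*) = \big(\mathcal{L}_{\bar{\mathcal{D}}}(h_{\sS'})-\mathcal{L}_{\sS'}(h_{\sS'})\big) + \big(\mathcal{L}_{\sS'}(h_{\sS'})-\mathcal{L}_{\sS'}(h_m^*)\big) + \big(\mathcal{L}_{\sS'}(h_m^*)-\mathcal{L}_{\bar{\mathcal{D}}}(h_m^*)\big)$, where the middle bracket is $\le 0$ by optimality of $h_{\sS'}$, the last bracket has zero expectation by unbiasedness of $\mathcal{L}_{\sS'}$, and the first bracket is at most $\E_{\sS'}\big[\sup_{h\in\mathcal H}(\mathcal{L}_{\bar{\mathcal{D}}}(h)-\mathcal{L}_{\sS'}(h))\big]$. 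Bounding this expected uniform deviation by the classical agnostic VC rate $\mathcal{O}\!\big(\sqrt{d_{\mathcal H}(\log(n/d_{\mathcal H})+1)/n}\big)$ (using the boundedness of $l$ from Assumption~\ref{assum:classification_error}, and absorbing the $n/2$-versus-$n$ mismatch and the $\mathrm e$-factors inside the constants) yields the claimed form, with $\delta_1,\delta_2$ read off from the standard constants; cf.\ the agnostic PAC bounds for VC classes in \citep{shalev2014understanding} and the analogous aggregated-ERM analysis in \citep{mansour2020three}.

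The step I expect to require the most care is the uniform-convergence bound itself, because $\sS'$ is not an i.i.d.\ sample from $\bar{\mathcal{D}}$: it is a \emph{stratified} sample in which exactly $n_m/2$ points are drawn independently from each $\mathcal{D}_m$. One clean way to handle this is a blockwise symmetrization/Rademacher argument — the Rademacher complexity of $\mathcal H$ composed with $l$ on such a stratified sample is still controlled by $\sqrt{d_{\mathcal H}/n}$ up to logarithmic factors, since the growth-function (VC) bound does not depend on how the marginals of the sample points are chosen — with Dudley entropy integration supplying the $\sqrt{\log(n/d_{\mathcal H})}$ factor. Alternatively one can invoke directly the generalization guarantee for aggregated ERM over heterogeneous sources of \citep{mansour2020three}, which is exactly of this type. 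A secondary, routine point is passing from a high-probability deviation bound to the in-expectation statement, done by integrating the tail.
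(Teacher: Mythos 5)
Your proposal is correct and follows essentially the same route as the paper's proof: two applications of the label discrepancy to transfer between $\mathcal{D}_m$ and $\bar{\mathcal{D}}$ (giving the factor $2$), the standard ERM decomposition with the empirical-optimality term dropped and the uniform deviation over $\mathcal{H}$ controlled in expectation, together with the key observation that although $\sS'$ is a stratified (independent but not identically distributed) sample, $\mathcal{L}_{\sS'}$ is still unbiased for $\mathcal{L}_{\bar{\mathcal{D}}}$ so the symmetrization/growth-function argument of \citet[Theorem~6.11]{shalev2014understanding} goes through, followed by Sauer's lemma and absorbing the $n/2$-versus-$n$ factor into the constants. The only differences are cosmetic (you compare the ERM directly against $h_m^*$ and use a one-sided supremum with a zero-mean term, where the paper introduces the $\bar{\mathcal{D}}$-optimal hypothesis $h^*$ and pays two suprema), and they do not affect the stated bound.
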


\begin{proof}
    We remind that the label discrepancy associated to the hypothesis class $\mathcal{H}$ for two distributions $\mathcal{D}_{1}$ and $\mathcal{D}_{2}$ over features and labels is defined as~\citep{mansour2020three}: 
    \begin{equation}
        \disc_{\mathcal{H}}\left(\mathcal{D}_{1}, \mathcal{D}_{2}\right) = \max_{h\in\mathcal{H}} \left|\mathcal{L}_{\mathcal{D}_{1}}\left(h\right) - \mathcal{L}_{\mathcal{D}_{2}}\left(h\right) \right|.
    \end{equation}
    Consider $m\in[M]$ and  $h^{*} \in \argmin_{h\in\mathcal{H}} \mathcal{L}_{\bar{\mathcal{D}}}(h)$. For $\sS' \sim \otimes_{m=1}^{M}\mathcal{D}_{m}^{n_{m}/2}$, we have
    \begin{flalign}
        \mathcal{L}_{\mathcal{D}_{m}} (h_{\sS'}) & - \mathcal{L}_{\mathcal{D}_{m}}\left(h_{m}^{*}\right)  & \nonumber 
        \\
        & = \mathcal{L}_{\mathcal{D}_{m}} (h_{\sS'}) - \mathcal{L}_{\bar{\mathcal{D}}}\left(h_{\sS'}\right) +  \mathcal{L}_{\bar{\mathcal{D}}}\left(h_{\sS'}\right) -  \mathcal{L}_{\bar{\mathcal{D}}}\left(h_{m}^{*}\right) + \mathcal{L}_{\bar{\mathcal{D}}}\left(h_{m}^{*}\right) - \mathcal{L}_{\bar{\mathcal{D}}}\left(h^{*}\right) + 
        \mathcal{L}_{\bar{\mathcal{D}}}\left(h^{*}\right) -
        \mathcal{L}_{\mathcal{D}_{m}}\left(h_{m}^{*}\right)& 
        \\
        & = \underbrace{\mathcal{L}_{\mathcal{D}_{m}} (h_{\sS'}) - \mathcal{L}_{\bar{\mathcal{D}}}\left(h_{\sS'}\right)}_{\leq \disc_{\mathcal{H}}\left(\mathcal{D}_{m}, \bar{\mathcal{D}}\right) } + 
        \underbrace{\mathcal{L}_{\bar{\mathcal{D}}}\left(h_{m}^{*}\right) - \mathcal{L}_{\mathcal{D}_{m}}\left(h_{m}^{*}\right)}_{\leq \disc_{\mathcal{H}}\left(\mathcal{D}_{m}, \bar{\mathcal{D}}\right) } +
        \underbrace{\mathcal{L}_{\bar{\mathcal{D}}}\left(h^{*}\right) - \mathcal{L}_{\bar{\mathcal{D}}}\left(h_{m}^{*}\right)}_{\leq 0} + 
        \mathcal{L}_{\bar{\mathcal{D}}}\left(h_{\sS'}\right) - \mathcal{L}_{\bar{\mathcal{D}}}\left(h^{*}\right)
        \\
        & \leq 2 \cdot \disc_{\mathcal{H}}\left(\mathcal{D}_{m}, \bar{\mathcal{D}}\right) + \mathcal{L}_{\bar{\mathcal{D}}}\left(h_{\sS'}\right) - \mathcal{L}_{\bar{\mathcal{D}}}\left(h^{*}\right) 
        \\
        & = 2 \cdot \disc_{\mathcal{H}}\left(\mathcal{D}_{m}, \bar{\mathcal{D}}\right) 
        + \mathcal{L}_{\bar{\mathcal{D}}}\left(h_{\sS'}\right) 
        - \mathcal{L}_{\sS'}\left(h_{\sS'}\right) 
        + \underbrace{\mathcal{L}_{\sS'}\left(h_{\sS'}\right) 
        - \mathcal{L}_{\sS'}\left(h^*\right)}_{\le 0} 
        + \mathcal{L}_{\sS'}\left(h^*\right) 
        - \mathcal{L}_{\bar{\mathcal{D}}}\left(h^{*}\right)
        \\
        & \leq 2 \cdot \disc_{\mathcal{H}}\left(\mathcal{D}_{m}, \bar{\mathcal{D}}\right) + 2 \cdot \sup_{h\in\mathcal{H}} \left|\mathcal{L}_{\bar{\mathcal{D}}}\left(h\right) - \mathcal{L}_{\sS'}\left(h\right)\right|. \label{eq:global_generalization_bound_part1} &
    \end{flalign}
    We now bound $\E_{\sS'\sim \otimes_{m=1}^{M}\mathcal{D}_{m}^{n_{m}/2}}\sup_{h\in\mathcal{H}} \left|\mathcal{L}_{\bar{\mathcal{D}}}\left(h\right) - \mathcal{L}_{\sS'}\left(h\right)\right|$. We first observe that for every $h\in\mathcal{H}$, we can write $\mathcal{L}_{\bar{D}}(h) = \E_{\mathcal{S'}\sim \otimes_{m=1}^{M}\mathcal{D}_{m}^{n_{m}/2}} \mathcal{L}_{\mathcal{S'}}\left(h\right)$. Therefore, despite the fact that the samples in $\sS'$ are not i.i.d., we can follow the same steps as in the proof of \citet[Theorem~6.11]{shalev2014understanding}, and conclude
    \begin{equation}
        \E_{\sS'\sim \otimes_{m=1}^{M}\mathcal{D}_{m}^{n_{m}/2}}\sup_{h\in\mathcal{H}} \left|\mathcal{L}_{\bar{\mathcal{D}}}\left(h\right) - \mathcal{L}_{\sS'}\left(h\right)\right| \leq \frac{4 + \sqrt{\log\left(\tau_{\mathcal{H}}\left(n \right)\right)}}{\sqrt{n}},
    \end{equation}
    where $\tau_{\mathcal{H}}$ is the growth function of class $\mathcal{H}$. 
    
    Let $d$ denote the VC dimension of $\mathcal H$. From Sauer's lemma \citep[Lemma~6.10]{shalev2014understanding}, we have that for $n>d+1$, $\tau_{\mathcal{H}}(n) \leq \left(\euler n/d\right)^{d_{\mathcal H}}$. Therefore, there exist constants $\delta_{1}, \delta_{2} \in \mathbb{R}$ (e.g., $\delta_1=4$, $\delta_2=\max\{4/\sqrt{d_{\mathcal H}},1\}$), such that
    \begin{equation}
        \E_{\sS'\sim \otimes_{m=1}^{M}\mathcal{D}_{m}^{n_{m}/2}}\sup_{h\in\mathcal{H}} \left|\mathcal{L}_{\bar{\mathcal{D}}}\left(h\right) - \mathcal{L}_{\sS'}\left(h\right)\right| \leq \frac{\delta_{1}}{2} \cdot \sqrt{\frac{d_{\mathcal H}}{n}}\cdot \sqrt{\delta_{2} +   \log\left(\frac{n}{d_{\mathcal H}} \right) }.
    \end{equation}
    Taking the expectation in  Eq.~\eqref{eq:global_generalization_bound_part1} and using this inequality, we have 
    \begin{equation}
            \E_{\sS' \sim \otimes_{m=1}^{M}\mathcal{D}_{m}^{n_{m}/2}} \left[\mathcal{L}_{\mathcal{D}_{m}}\left(h_{\sS'} \right)\right] \leq \mathcal{L}_{\mathcal{D}_{m}}\left(h_{m}^{*} \right) + \delta_{1} \cdot \sqrt{\frac{d_{\mathcal H}}{n}}\cdot \sqrt{\delta_{2} +   \log\left(\frac{n}{d_{\mathcal H}} \right) }  + 2 \cdot \disc_{\mathcal{H}}\left(\bar{\mathcal{D}}, \mathcal{D}_{m}\right).
    \end{equation}
\end{proof}

The following Lemma proves an upper bound on the expected error of the $1$-NN learning rule.

\begin{lem}[Adapted from {\citep[Thm~19.3]{shalev2014understanding}}]
    \label{lem:local_generalization_bound}
    Under Assumptions~\ref{assum:bounded_loss},~\ref{assum:classification_error}, and~\ref{assum:lipschitz} for all $m\in[M]$, it holds 
    \begin{equation}
        \E_{\sS''_{m} \sim \mathcal{D}_{m}^{n_{m}/2}} \left[\mathcal{L}_{\mathcal{D}_{m}}\left(h^{(1)}_{\sS_{m}'', \sS'}\right)\right] \leq 2\mathcal{L}_{\mathcal{D}_{m}}\left(h_{m}^{*}\right) + 6\Bigg\{\gamma_{1} + \gamma_{2} \cdot
        \Big[\mathcal{L}_{\mathcal{D}_{m}}\left(h_{\sS'} \right) - \mathcal{L}_{\mathcal{D}_{m}}\left(h_{m}^{*} \right)\Big] \Bigg\}\cdot \frac{ \sqrt{p}}{\sqrt[p+1]{n_{m}}}.
    \end{equation}
\end{lem}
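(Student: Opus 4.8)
\textbf{Proof plan for Lemma~\ref{lem:local_generalization_bound}.}
The plan is to adapt the classical $1$-nearest-neighbour analysis of \citet[Thm~19.3]{shalev2014understanding} to our setting, where the relevant Lipschitz-type control is not on the raw conditional probability $\eta_m$ but on $\eta_m$ as a function of the \emph{representation} $\phi_{h_\sS}$, with the Lipschitz constant degraded by the excess risk of the global model (Assumption~\ref{assum:lipschitz}). First I would fix $m\in[M]$ and condition on $\sS'$, so that the representation map $\phi_{h_\sS}$ and the global model $h_\sS$ are frozen; the only randomness left is $\sS_m''\sim\mathcal D_m^{n_m/2}$. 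For a query point $\vec x$ let $\vec x_{\pi(\vec x)}$ denote its nearest neighbour in $\sS_m''$ under the Euclidean distance in representation space, and let $y_{\pi(\vec x)}$ be the associated label. The $1$-NN rule predicts $\vec e_{y_{\pi(\vec x)}}$, so by Assumption~\ref{assum:classification_error} its loss at $(\vec x,y)$ is $\mathds 1_{y\neq y_{\pi(\vec x)}}$, and taking expectations over the fresh label $y\sim\mathcal D_m(\cdot\mid\vec x)$ and over the label $y_{\pi(\vec x)}$ gives the standard bound
\begin{equation}
    \E\big[\mathds 1_{y\neq y_{\pi(\vec x)}}\big] \le 2\eta_m(\vec x)(1-\eta_m(\vec x)) + \big|\eta_m(\vec x)-\eta_m(\vec x_{\pi(\vec x)})\big|.
\end{equation}
The first term is bounded by $2\min\{\eta_m(\vec x),1-\eta_m(\vec x)\}$, whose expectation over $\vec x$ is at most $2\mathcal L_{\mathcal D_m}(h_m^*)$ (the Bayes risk, which under Assumptions~\ref{assum:classification_error} lower-bounds $\mathcal L_{\mathcal D_m}(h_m^*)$). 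So the whole game is to bound $\E_{\sS_m''}\E_{\vec x}\big|\eta_m(\vec x)-\eta_m(\vec x_{\pi(\vec x)})\big|$.

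For that second term I would invoke Assumption~\ref{assum:lipschitz}: it bounds $|\eta_m(\vec x)-\eta_m(\vec x_{\pi(\vec x)})|$ by $L_m\cdot d(\phi_{h_\sS}(\vec x),\phi_{h_\sS}(\vec x_{\pi(\vec x)}))$ with $L_m=\gamma_1+\gamma_2\big(\mathcal L_{\mathcal D_m}(h_\sS)-\mathcal L_{\mathcal D_m}(h_m^*)\big)$. Hence it remains to control the expected distance from a random query point to its nearest neighbour among $n_m/2$ i.i.d.\ points, \emph{in representation space}. Here Assumption~\ref{assum:bounded_loss} is crucial: the representations all lie in the cube $[0,1]^p$, so I can run the usual covering/box-counting argument — partition $[0,1]^p$ into $T^p$ subcubes of side $1/T$ for an integer $T$ to be optimized; a query lands in a cube already occupied by a sample point with high probability, contributing distance at most $\sqrt p/T$, and lands in an empty cube with probability whose expectation over the sample is at most $T^p/(n_m/2)\cdot$(constant) by the same counting lemma used in \citep[proof of Thm~19.3]{shalev2014understanding}, contributing at most $\sqrt p$ (the diameter of the cube). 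Optimizing $T\asymp (n_m)^{1/(p+1)}$ balances the two and yields $\E_{\sS_m''}\E_{\vec x}\, d(\phi_{h_\sS}(\vec x),\phi_{h_\sS}(\vec x_{\pi(\vec x)})) \le c\,\sqrt p\, (n_m)^{-1/(p+1)}$ for an absolute constant $c$ (the factor $6$ in the statement absorbs $2c$ and the constants from the covering lemma).

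Putting the pieces together: taking expectations over $\vec x$ and $\sS_m''$ of the pointwise bound, using $\E_{\vec x}\,2\eta_m(1-\eta_m)\le 2\mathcal L_{\mathcal D_m}(h_m^*)$ and the distance bound scaled by $L_m$, gives
\begin{equation}
    \E_{\sS_m''\sim\mathcal D_m^{n_m/2}}\big[\mathcal L_{\mathcal D_m}(h^{(1)}_{\sS_m'',\sS'})\big] \le 2\mathcal L_{\mathcal D_m}(h_m^*) + 6\Big\{\gamma_1+\gamma_2\big[\mathcal L_{\mathcal D_m}(h_\sS)-\mathcal L_{\mathcal D_m}(h_m^*)\big]\Big\}\cdot\frac{\sqrt p}{\sqrt[p+1]{n_m}},
\end{equation}
which is the claim. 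The main obstacle is the nearest-neighbour distance estimate in $\mathbb R^p$: one has to be careful that the covering argument is carried out in representation space (legitimate because $\phi_{h_\sS}$ is fixed once $\sS'$ is fixed and $\phi_{h_\sS}(\mathcal X)\subseteq[0,1]^p$ by Assumption~\ref{assum:bounded_loss}), that the points pushed forward through $\phi_{h_\sS}$ are still i.i.d.\ (immediate), and that the expectation over $\sS_m''$ of the probability of hitting an empty subcube is handled by the combinatorial lemma rather than a union bound, so as to get the right $n_m^{-1/(p+1)}$ rate with only a $\sqrt p$ prefactor. A minor subtlety is that we must also argue $\mathcal L_{\mathcal D_m}(h_\sS)\ge\mathcal L_{\mathcal D_m}(h_m^*)$ so that $L_m>0$, which is immediate from the definition of $h_m^*$ as a minimizer — except $h_\sS\in\mathcal H$ so this is fine — and that the Bayes-risk inequality $\E_{\vec x}\min\{\eta_m,1-\eta_m\}\le\mathcal L_{\mathcal D_m}(h_m^*)$ follows from Assumption~\ref{assum:classification_error} since the $\vec e_y$-valued predictor that outputs $\arg\max$ is in general not in $\mathcal H$, so one instead bounds the Bayes risk by $\mathcal L_{\mathcal D_m}(h_m^*)$ via the fact that any $h$ incurs at least the Bayes risk under a loss dominated by the $0$--$1$ loss; I would state this cleanly as a one-line remark.
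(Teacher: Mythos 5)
Your proposal is correct and follows essentially the same route as the paper: the standard one-nearest-neighbour decomposition of \citet[Lemma~19.1]{shalev2014understanding} (loss split into $2\eta_m(1-\eta_m)$ plus $|\eta_m(\vec{x})-\eta_m(\vec{x}_{\pi(\vec{x})})|$), Assumption~\ref{assum:lipschitz} to convert the second term into a representation-space distance with Lipschitz constant $\gamma_1+\gamma_2(\mathcal{L}_{\mathcal{D}_m}(h_{\sS'})-\mathcal{L}_{\mathcal{D}_m}(h_m^*))$, and the box-covering argument in $[0,1]^p$ with $T\asymp n_m^{1/(p+1)}$ exactly as in the proof of \citet[Thm~19.3]{shalev2014understanding}, with the constant $6$ absorbing the covering constants. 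The only difference is cosmetic: where you try to justify $\E_{\vec{x}}[\min\{\eta_m,1-\eta_m\}]\le\mathcal{L}_{\mathcal{D}_m}(h_m^*)$ via a domination argument, the paper simply identifies $h_m^*$ with the Bayes point-mass rule (for which, under Assumption~\ref{assum:classification_error}, the loss is the $0$--$1$ loss), so both treatments rest on the same reading of $h_m^*$.
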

\begin{proof}
    Recall that for $m\in[M]$, the Bayes optimal rule, i.e., the hypothesis that minimizes $\mathcal{L}_{\mathcal{D}_{m}}(h)$ over all functions, is
    \begin{equation}
        h_{m}^{*}\left(\vec{x}\right) = \mathds{1}_{\left\{\eta_{m}\left(\vec{x}\right) > 1/2\right\}}.
    \end{equation}
We note that the $1$-NN rule can be expressed as follows:
\begin{equation}
    \label{eq:1_nn_rule}
    \left[ h^{(1)}_{\sS_m'',\sS'}\left(\vec{x}\right)\right]_y = \mathds{1}_{\left\{y=\pi^{(1)}_{\sS''_{m}}\left(\vec{x}\right)\right\}},
\end{equation}
where we are putting in evidence that the permutation $\pi_m$ depends on the dataset $\sS''_m$.
Then, under Assumption~\ref{assum:classification_error},  the loss function $l(\cdot)$ reduces to the $0$-$1$ loss.

    Consider samples $\sS \sim \otimes_{m=1}^{M}\mathcal{D}_{m}^{n_{m}}$. Using  Assumptions~\ref{assum:bounded_loss}, \ref{assum:classification_error} and~\ref{assum:lipschitz}, and following the same steps as in \citep[Lemma~19.1]{shalev2014understanding}, we have
    \begin{flalign}
        \E_{\sS''_{m} \sim \mathcal{D}_{m}^{n_{m}/2}}  & \left[\mathcal{L}_{\mathcal{D}_{m}}\left(h^{(1)}_{\sS''_{m}, \sS'}\right)\right]  -  2\mathcal{L}_{\mathcal{D}_{m}}\left(h_{m}^{*}\right) \leq   & \nonumber 
        \\
        & \Bigg\{\gamma_{1} + \gamma_{2} \cdot \Big[\mathcal{L}_{\mathcal{D}_{m}}\left(h_{\sS'} \right) - \mathcal{L}_{\mathcal{D}_{m}}\left(h_{m}^{*} \right)\Big] \Bigg\} \times \underbrace{\E_{\sS''_{m,\mathcal{X}} \sim \mathcal{D}_{m, \mathcal{X}}^{n_{m}/2},~\vec{x}\sim \mathcal{D}_{m,\mathcal{X}}}\bigg[ d \left(\phi_{h_{\sS'}}\left(\vec{x}\right) , \phi_{h_{\sS'}}\left(\pi^{(1)}_{\sS''_{m}}\left(\vec{x}\right)\right) \right) \bigg]}_{\triangleq \mathcal{T}_{\sS'}},
    \end{flalign}
    where $\sS_{m,\mathcal{X}}''$ denotes the set of input features in the dataset $\sS_m''$ and   $\mathcal{D}_{m,\mathcal{X}}$ the marginal distribution of $\mathcal{D}_{m}$ over $\mathcal{X}$. Note  that  $\sS''_{m}$ is independent from $\sS'$. 

    As in the proof of \citep[Theorem~19.3]{shalev2014understanding},
    let $T$ be an integer to be precised later on. We consider $r=T^{p}$ and $C_{1},\dots,C_{r}$ to be the cover of the set $[0,1]^{p}$ using boxes with side $1/T$. We bound the term $\mathcal{T}_{\sS'}$ independently from $\sS'$ as follows
    \begin{equation}
        \E_{\sS''_{m} \sim \mathcal{D}_{m, \mathcal{X}}^{n_{m}/2},~\vec{x}\sim \mathcal{D}_{m,\mathcal{X}}}\bigg[ d \left(\phi_{h_{\sS'}}\left(\vec{x}\right) , \phi_{h_{\sS'}}\left(\pi^{(1)}_{\sS''_{m}}\left(\vec{x}\right)\right) \right) \bigg] \leq \sqrt{p} \left(\frac{2 T^{p}}{n_{m}\euler} + \frac{1}{T}\right).
    \end{equation}
    If we set $\epsilon = 2 \left(\frac{2}{n_m}\right)^{\frac{1}{p+1}}$ and  $T = \ceil{1/\epsilon}$, it follows $1/\epsilon \le T < 2/\epsilon$ and then   
    \begin{align}
        \E_{\sS''_{m} \sim \mathcal{D}_{m, \mathcal{X}}^{n_{m}/2},~\vec{x}\sim \mathcal{D}_{m,\mathcal{X}}}\bigg[ d \left(\phi_{h_{\sS'}}\left(\vec{x}\right) , \phi_{h_{\sS'}}\left(\pi^{(1)}_{\sS''_{m}}\left(\vec{x}\right)\right) \right) \bigg] & \leq \sqrt{p} \left(\frac{2 (2/\epsilon)^{p}}{n_{m}\euler } + \epsilon\right) \\
        & = \sqrt{p} \left(\frac{1}{e} + 2\right) \left(\frac{2}{n_m}\right)^{\frac{1}{p+1}}\\
        & \le 6 \frac{\sqrt{p}}{\sqrt[p+1]{n_m}}.
    \end{align}
    Thus,
    \begin{equation}
        \E_{\sS'_{m} \sim \mathcal{D}_{m}^{n_{m}}}   \left[\mathcal{L}_{\mathcal{D}_{m}}\left(h^{(1)}_{\sS''_{m}, \sS'}\right)\right]  \leq  2\mathcal{L}_{\mathcal{D}_{m}}\left(h_{m}^{*}\right)   + 6 \frac{\sqrt{p}}{\sqrt[p+1]{n_{m}}} \Bigg\{\gamma_{1} + \gamma_{2} \cdot \Big[\mathcal{L}_{\mathcal{D}_{m}}\left(h_{\sS} \right) - \mathcal{L}_{\mathcal{D}_{m}}\left(h_{m}^{*} \right)\Big] \Bigg\}.
    \end{equation}
\end{proof}

\newpage
\section{Additional Experiments}
\label{app:experiments}
\begin{figure*}[t]
    \centering
    \begin{subfigure}[b]{0.48\textwidth}  
        \centering 
        \includegraphics[width=\textwidth, height=0.8\textwidth]{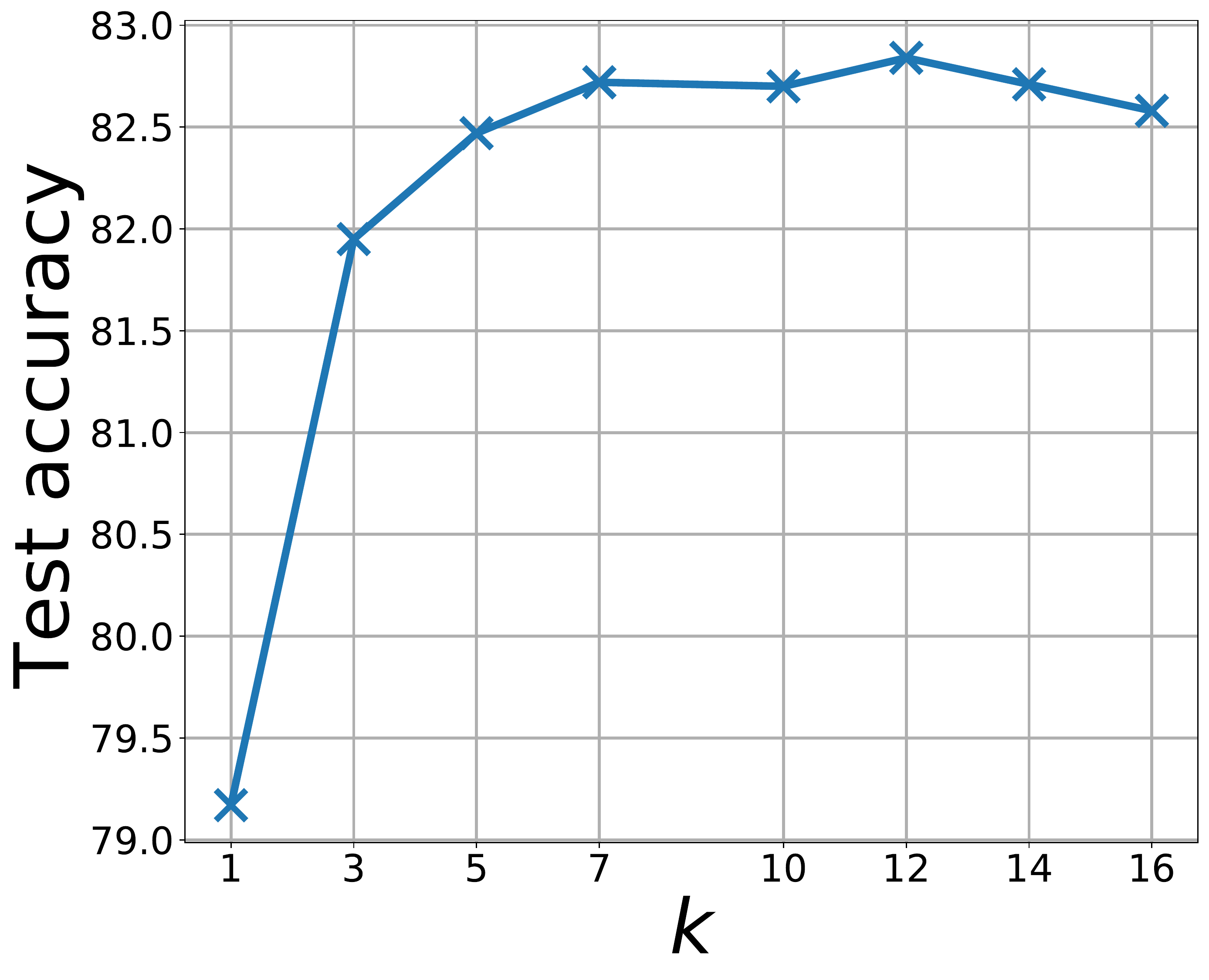}
        \subcaption[]{\small CIFAR-10.}
    \end{subfigure}
    \hfill
    \begin{subfigure}[b]{0.48\textwidth}
        \centering
        \includegraphics[width=\textwidth, height=0.8\textwidth]{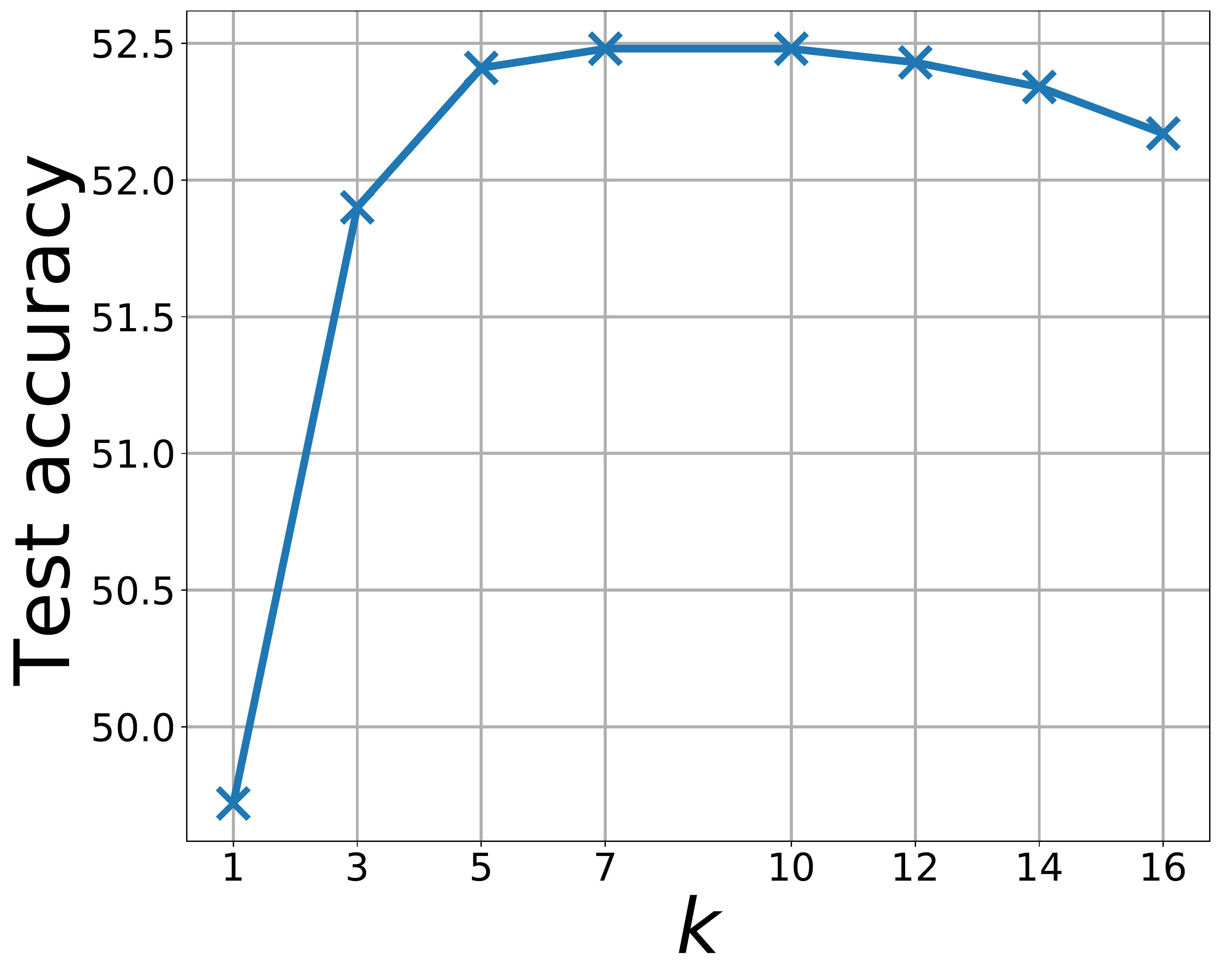}
        \subcaption[]{\small CIFAR-100.}
    \end{subfigure}
    \hfill
    \caption[]
    {\small Test accuracy vs number of neighbors $k$.} 
    \label{f:k_effect}
\end{figure*}

\begin{figure*}[!t]
    \centering
    \begin{subfigure}[b]{0.48\textwidth}  
        \centering 
        \includegraphics[width=\textwidth, height=0.8\textwidth]{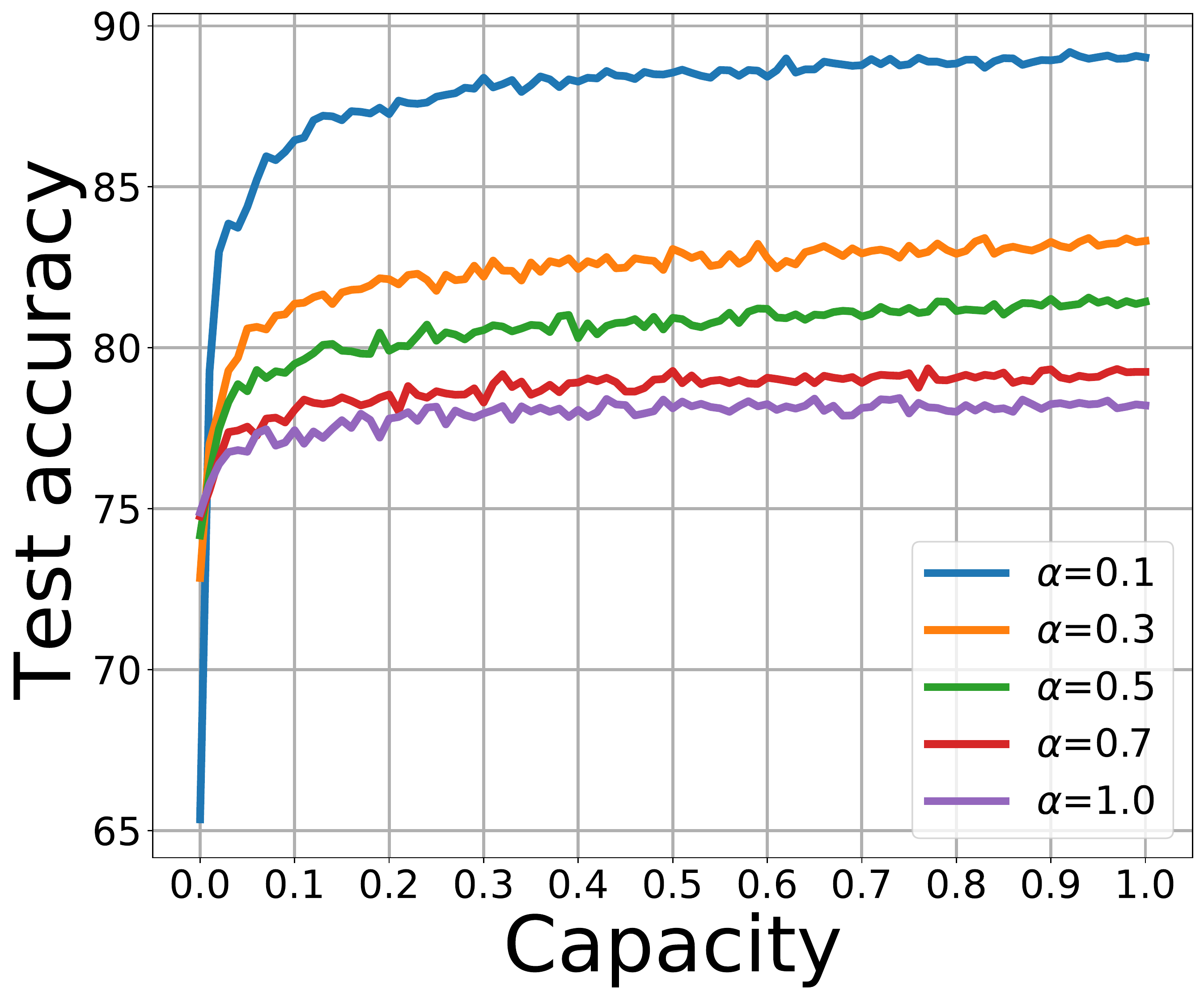}
        \subcaption[]{\small CIFAR-10.}
    \end{subfigure}
    \hfill
    \begin{subfigure}[b]{0.48\textwidth}
        \centering
        \includegraphics[width=\textwidth, height=0.8\textwidth]{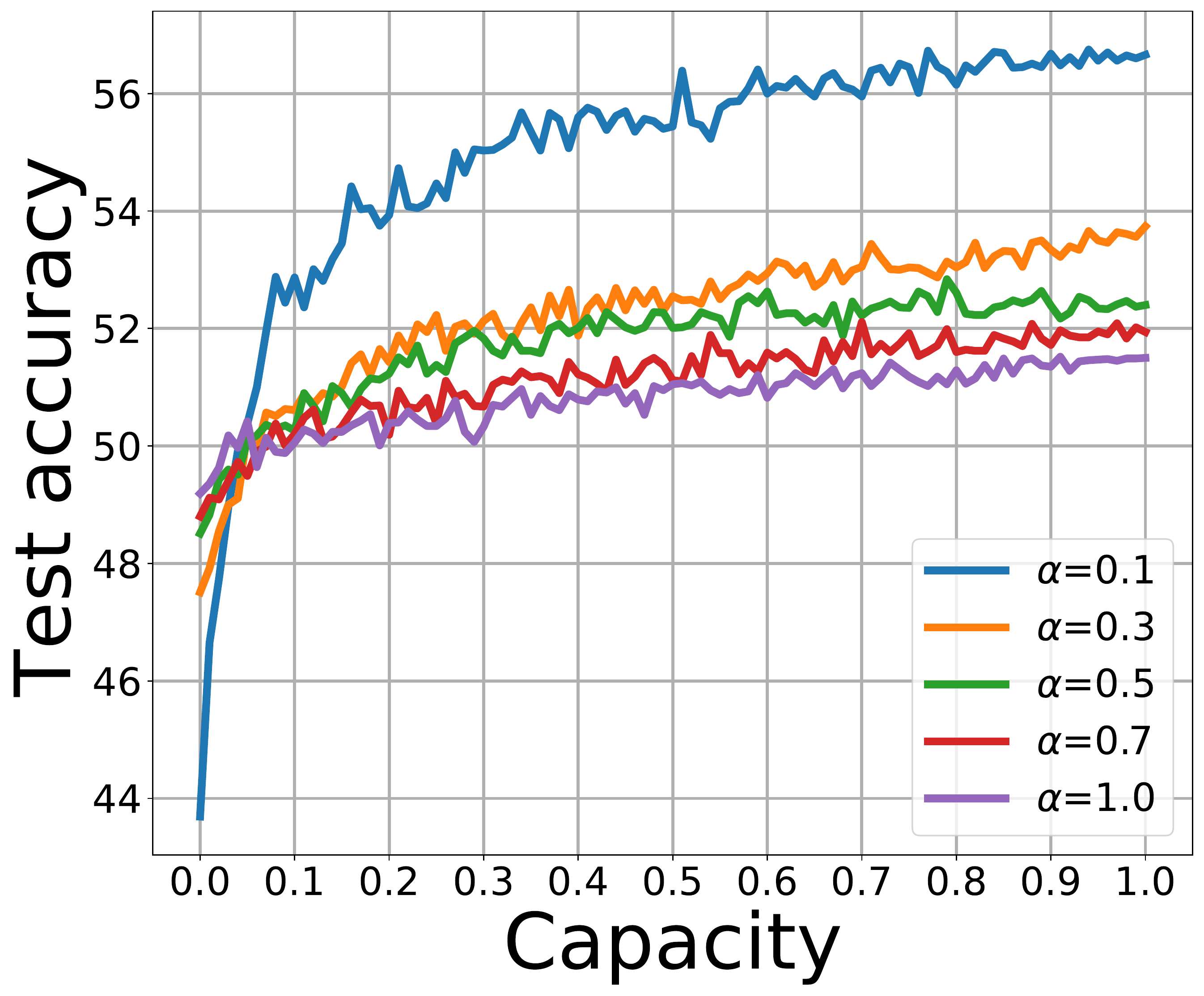}
        \subcaption[]{\small CIFAR-100.}
    \end{subfigure}
    \hfill
    \caption[]
    {\small Test accuracy vs capacity (local datastore size) when the global model is retrained for each value of $\alpha$. The capacity is normalized with respect to the initial size of the client's dataset partition. Smaller values of $\alpha$ correspond to more heterogeneous data distributions across clients. The curves start from different accuracy values for zero capacity, but are qualitatively similar to those in \cref{f:capacity_effect} for large capacities. As expected, the global model performs worse the more heterogeneous the local distributions are, but the local model is able to compensate such effect (at least partially) as far as the datastore is large enough.} 
    \label{f:hetero_effect_appendix}
\end{figure*}

\paragraph{Effect of kernel scale parameter $\sigma$.} We  consider distance metrics of the form 
\begin{equation}
    \forall \vec{z},\vec{z}' \in \mathbb{R}^{p};~d_{\sigma}\left(\vec{z}, \vec{z}'\right) = \frac{\left\|\vec{z}-\vec{z'}\right\|_{2}}{\sigma}, 
\end{equation}
where $\sigma\in\mathbb{R}^{+}$ is a  scale parameter.  \cref{f:scale_effect} shows that \knnper's performance is not highly sensitive to the selection of the length scale parameter, as scaling the Euclidean distance by a constant factor $\sigma$ has almost no effect for values of $\sigma$ between $0.1$ and $1000$. 

\begin{figure*}[t]
    \centering
    \begin{subfigure}[b]{0.48\textwidth}  
        \centering 
        \includegraphics[width=\textwidth, height=0.8\textwidth]{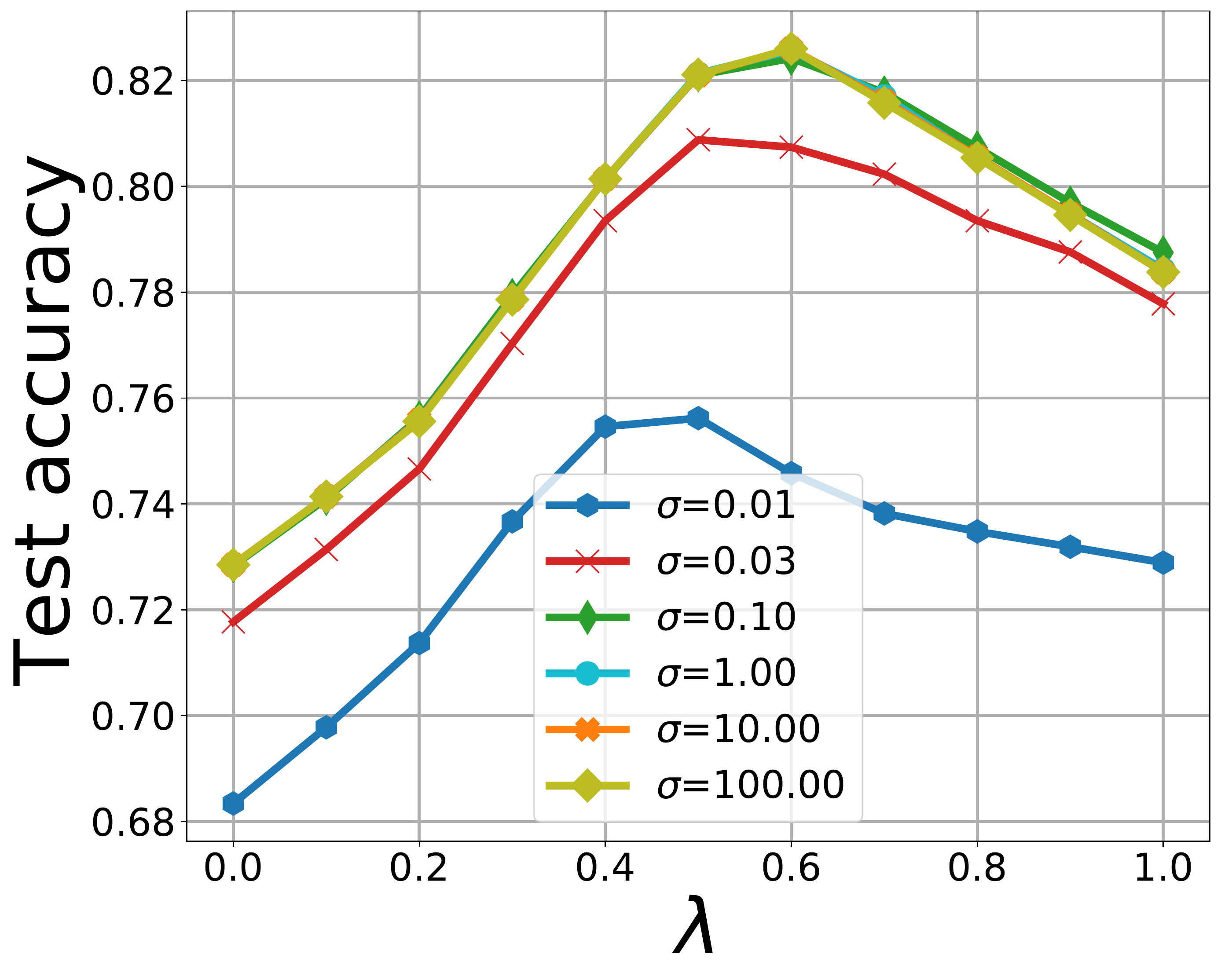}
        \subcaption[]{\small CIFAR-10.}
    \end{subfigure}
    \hfill
    \begin{subfigure}[b]{0.48\textwidth}
        \centering
        \includegraphics[width=\textwidth, height=0.8\textwidth]{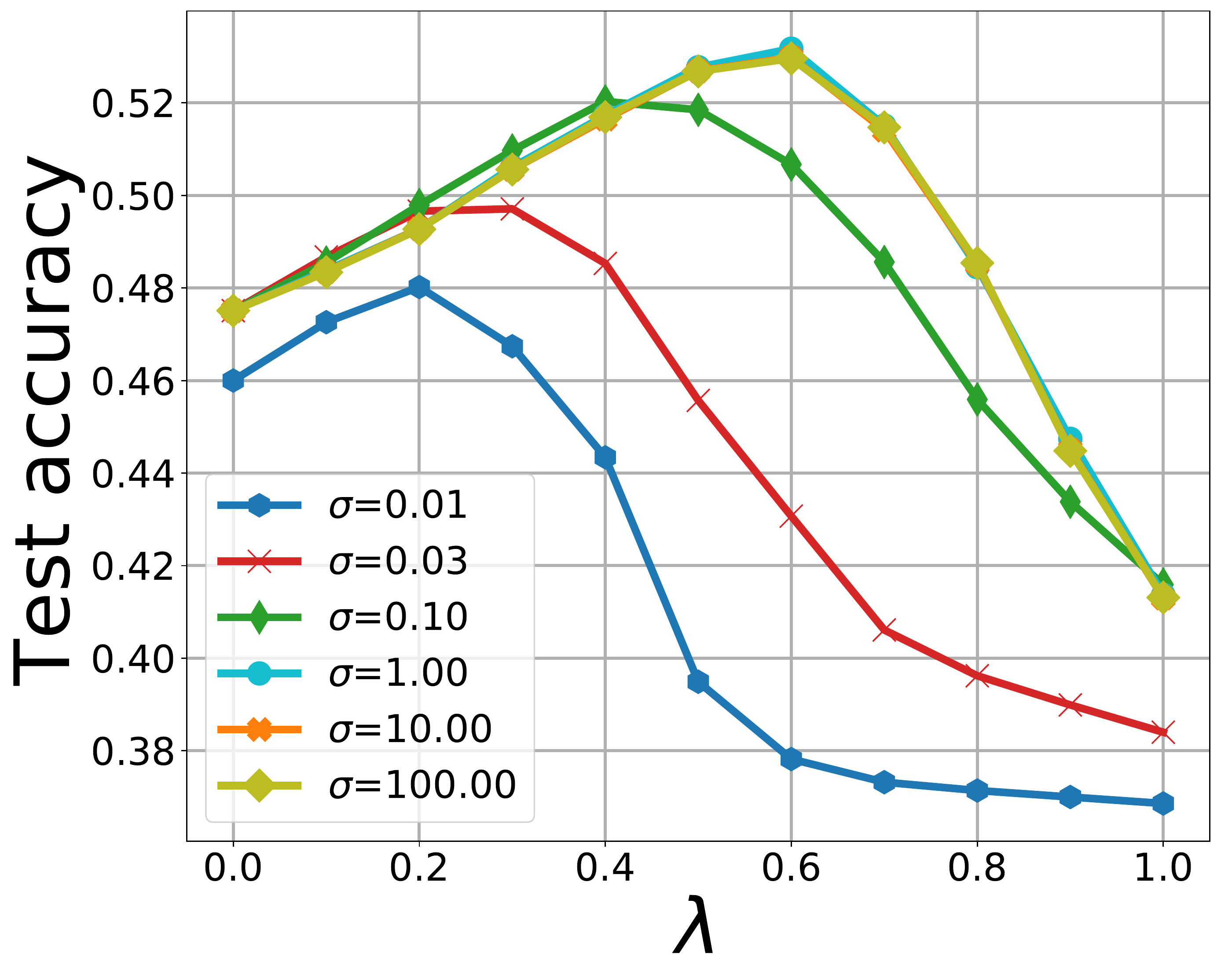}
        \subcaption[]{\small CIFAR-100.}
    \end{subfigure}
    \hfill
    \caption[]
    {\small Test accuracy vs the interpolation parameter $\lambda$ for different values of the kernel scale parameter $\sigma$.} 
    \label{f:scale_effect}
\end{figure*}

\paragraph{Effect of datastore's size on the optimal $\lambda$.}
Figure~\ref{f:lambda_opt} shows the effect of the local number of samples $n_{m}$ on the optimal mixing parameter $\lambda_{\text{opt}}$ (evaluated on the client's test dataset). The number of samples changes across clients and, for the same client, with different values of the capacity.
The figure shows a positive correlation between the local number of samples and the optimal mixing parameter and then validates the intuition that clients with more samples tend to rely more on the memorization mechanism than on the base model, as captured by the generalization bound from Theorem~\ref{thm:generalization_bound}.   

\paragraph{Effect of hardware heterogeneity.}
In our experiments above, clients' local datasets had different size, which can also be due to different memory capabilities. In order to investigate more in depth the effect of system heterogeneity, we split the new clients in two groups: ``weak'' clients with normalized capacity $1/2-\Delta C$ and ``strong'' clients with normalized capacity $1/2+\Delta C$, where $\Delta C \in (0,1/2)$ is a parameter controlling the hardware heterogeneity of the system. Note that the total amount of memory in the system is constant, but varying $\Delta C$ changes its distribution across clients from a homogeneous scenario ($\Delta C =0$) to an extremely heterogeneous one ($\Delta C= 0.5$). Figure~\ref{fig:harware_hetero} shows the effect of the hardware heterogeneity, as captured by $\Delta C$. As the marginal improvement from additional memory is decreasing (see, e.g., Fig.~\ref{f:capacity_effect}) the gain for strong clients does not compensate the loss for weak ones. The overall effect is then that the average test accuracy decreases as system heterogeneity increases. 

\paragraph{Adding compression techniques.} \knnper{} can be combined with nearest neighbours compression techniques as \texttt{ProtoNN} \citep{gupta17protonn}. 
\texttt{ProtoNN} reduces the amount of memory required by jointly learning 1) a small number of prototypes to represent the entire training set and 2) a data projection into a low dimensional space. 
We combined \knnper{} and \texttt{ProtoNN} and explored both the effect of the number of prototypes and  the projection dimension used in \texttt{ProtoNN}. For each client, the number of prototypes is set to a given fraction of the total number of available samples. We refer to this quantity also as capacity. We varied the capacity in the grid $\{i\times 10^{-1}, i \in [10]\}$, and the projection dimension in the grid $\{i \times 100, i \in [12]\} \cup \{1280\}$. Note that smaller projection dimension and less prototypes correspond to a smaller memory footprint, suited for more restricted hardware. Our implementation is based on \texttt{ProtoNN}'s official.\footnote{
    \url{https://github.com/Microsoft/EdgeML}.
}
Figure~\ref{f:heatmap_cifar10} shows that, on CIFAR-10, \texttt{ProtoNN} allows to reduce the \knnper's memory footprint by a factor four (using $n_m/3$ prototypes and projection dimension $1000$) at the cost of a limited reduction in test accuracy ($82.3\%$ versus $83.0\%$ in Table~\ref{tab:results_summary}). Note that \knnper{} with \texttt{ProtoNN} still outperforms all other methods.
On CIFAR-100, 
\texttt{ProtoNN}'s compression techniques appear less advantageous: the approach loses about 3 percentage points ($52.1\%$ versus $55.0\%$ in Table~\ref{tab:results_summary}) while only reducing memory requirement by $20\%$.

\begin{figure*}
    \centering
    \begin{subfigure}[b]{0.48\textwidth}  
        \centering 
        \includegraphics[width=\textwidth, height=0.8\textwidth]{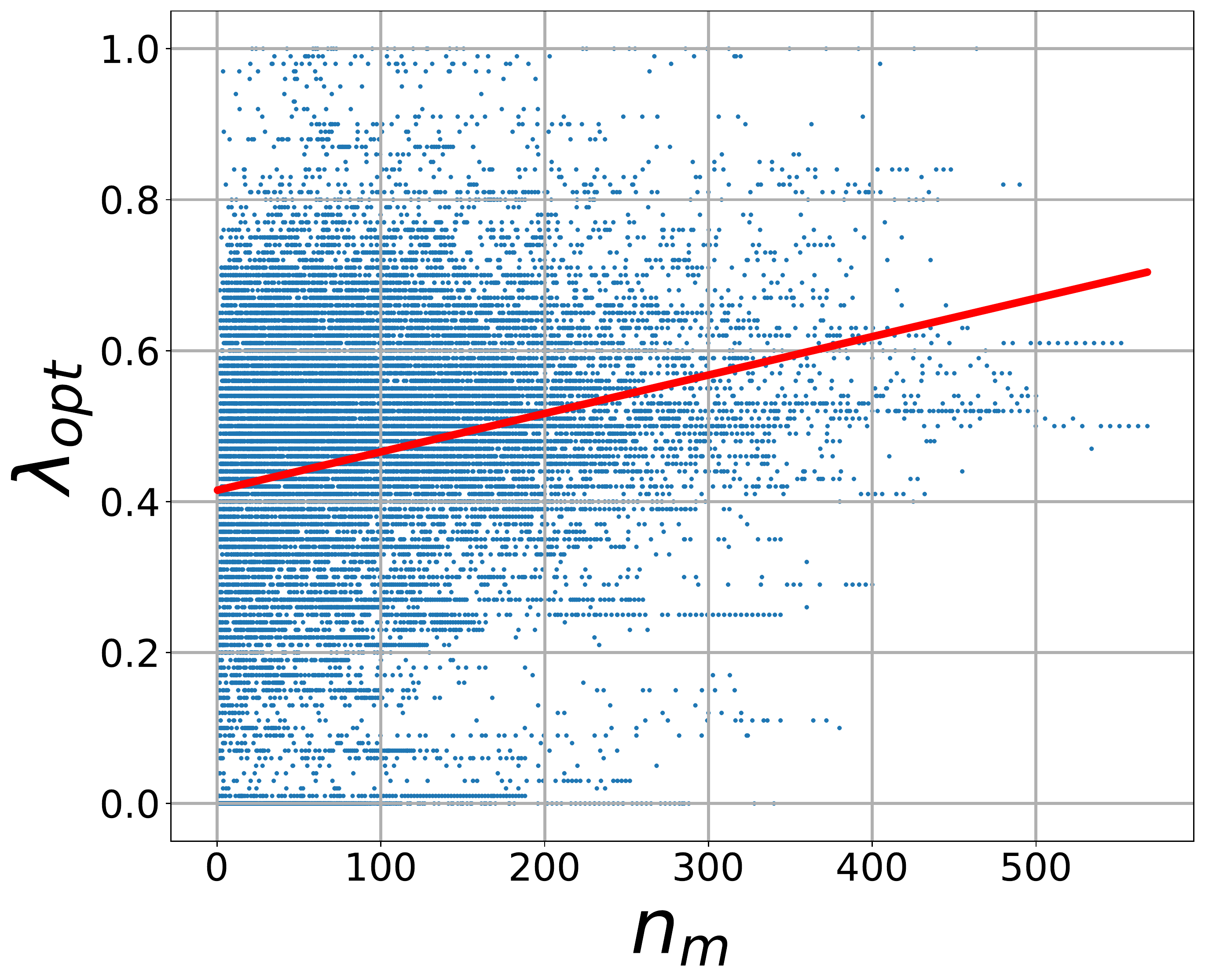}
        \subcaption[]{\small CIFAR-10.}
    \end{subfigure}
    \hfill
    \begin{subfigure}[b]{0.48\textwidth}
        \centering
        \includegraphics[width=\textwidth, height=0.8\textwidth]{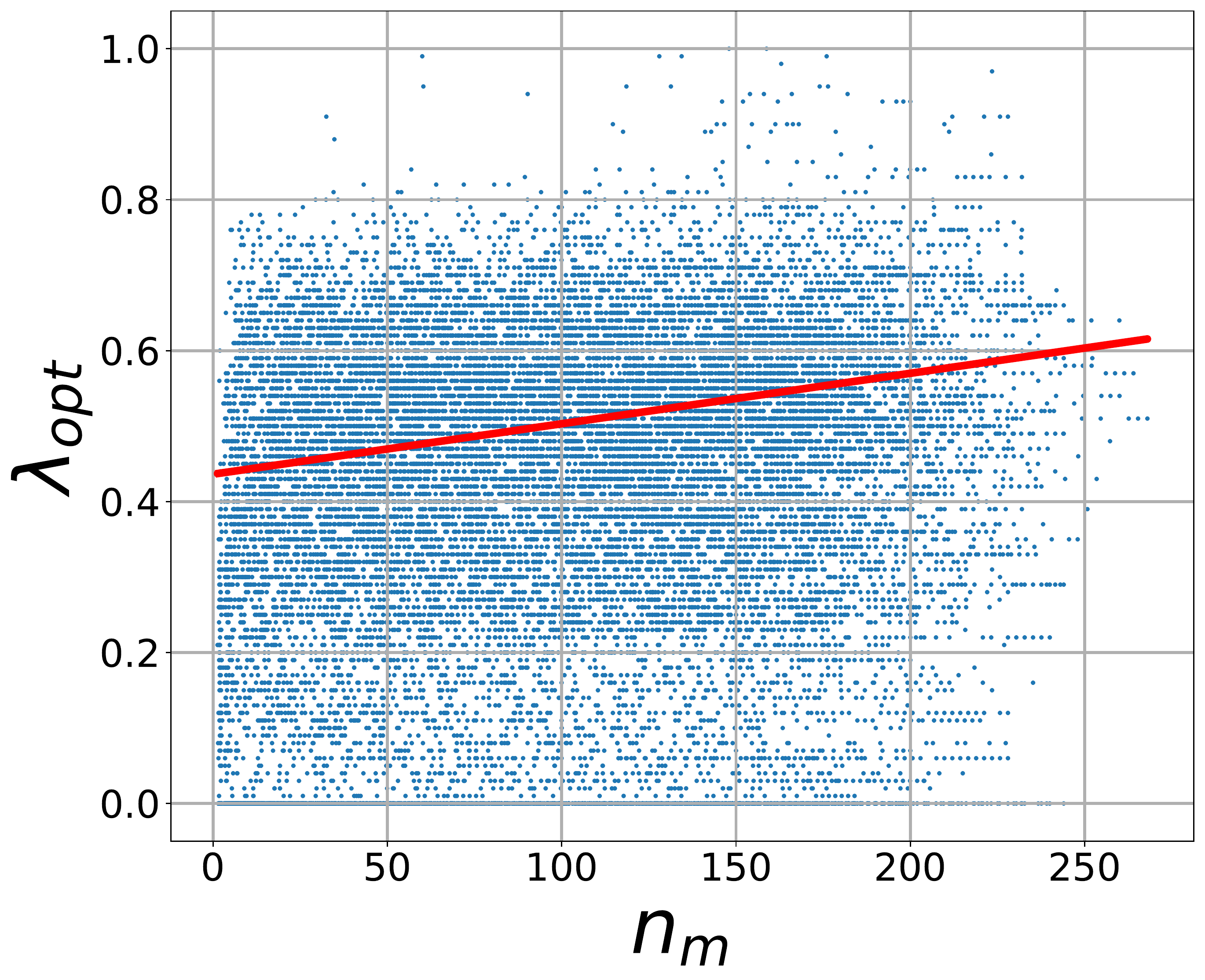}
        \subcaption[]{\small CIFAR-100.}
    \end{subfigure}
    \hfill
    \caption[]
    {\small$\lambda_{\text{opt}}$ vs local number of samples $n_{m}$.} 
    \label{f:lambda_opt}
\end{figure*}

\paragraph{Effect of global model's quality.} Assumption~\ref{assum:lipschitz} stipulates that the smaller the expected loss of the global model, the more accurate the corresponding representation. As  representation quality improves, we can expect that  kNN accuracy improves too. This effect is quantified by Lemma~\ref{lem:local_generalization_bound}, where the loss of the local memorization mechanism is upper bounded by a term that depends linearly on the loss of the global model. In order to validate this assumption, we study the relation between the test accuracies of the global model and \texttt{kNN-Per}. In particular, we train two global models, one for CIFAR-10 and the other for CIFAR-100, in a centralized way, and we save the weights at different stages of the training, leading to global models with different qualities.
Figure~\ref{f:base_model_effect} shows the test accuracy of \knnper{} with $\lambda=1$ (i.e., when only the knn predictor is used) as a function of the global model's test accuracy for different levels of heterogeneity on CIFAR-10 and CIFAR-100 datasets. We observe that, quite unexpectedly, the relation between the two accuracies is almost linear. The experiments also confirm what observed in Fig.~\ref{f:capacity_effect}: \knnper performs better when local distributions are more heterogeneous (smaller $\alpha$).
Similar plots with $\lambda$ optimized locally at every client are shown in Fig.~\ref{f:base_model_effect_lambda_opt}.

\paragraph{Robustness to distribution shift.} As previously mentioned, \knnper{} offers a simple and effective way to address statistical heterogeneity in a dynamic environment where  client's data distributions change after training.
We simulate such a dynamic environment as follows.
Client $m$ initially has a datastore built using instances sampled from a data distribution $\mathcal{D}_m$. 
For time step $t<t_{0}$, client $m$ receives a batch of $n_{m}^{(t)}$ instances drawn from $\mathcal{D}_m$. 
At time step $t_{0}$, we suppose that a data distribution shift takes place, i.e., for $ t_{0} \le t \le T$, client  $m$ receives $n_{m}^{(t)}$ instances drawn from a data distribution $\mathcal{D}_m'\neq \mathcal{D}_m$. 
Upon receiving new instances, client $m$ may use those instances to update its datastore. We consider $3$ different strategies: (1) \emph{first-in-first-out} (FIFO) where, at time step $t$, the $n_{m}^{(t)}$  oldest samples are replaced by the newly obtained samples;
(2) \emph{concatenate}, where the new samples are simply added to the datastore;
(3) \emph{fixed datastore}, where the datastore is not updated at all. 
In our simulations, we consider CIFAR-10/100 datasets with $M=100$ clients.
Once again, we used a symmetric Dirichlet distribution to generate two datasets for every client. 
In particular, for each label $y$ we sampled two vectors $p_{y}$ and $p_{y}'$ from a Dirichlet distribution of order $M=100$ and parameter $\alpha=0.3$. 
Then, for client $m$, we generated two datasets $\sS_{m}$ and $\sS'_{m}$ by allocating $p_{y, m}$ and $p_{y,m}'$ fraction of all training instances of class $y$.\footnote{
    We always make sure that $|\sS_{m}| \leq |\sS'_{m}|$.
}
Both $\sS_{m}$ and $\sS'_{m}$ are partitioned into training and test sets following the original CIFAR training/test data split. 
Half of the training set obtained from $\sS_{m}$ is stored in the datastore, while the rest is further partitioned into $t_{0}$ batches $\sS_{m}^{(0)}, \dots, \sS_{m}^{(t_{0}-1)}$.
These batches are the new samples arriving at client $m$. Similarly, $\sS_{m}'$ is partitioned into $T-t_{0}$ equally sized batches. Figure~\ref{f:stream} shows the evaluation of the test accuracy across time.
If  clients do not update their datastores, there is a significant drop in accuracy as soon as the distribution changes at $t_0 = 50$.
If datastore are updated using FIFO, we observe some random fluctuations for the accuracy for $t< t_0$, as repository changes affect the kNN predictions. While accuracy inevitably drops for $t=t_0$, it then increases as datastores are progressively populated by instances from the new distributions. 
Once all samples from the previous distributions are evicted, the accuracy settles around a new value (higher or lower than the one for $t < t_0$ depending on the difference between the new and the old distributions).
If  clients keep adding new samples to their datastores  (the ``concatenate''  strategy), results are similar, but 1)~accuracy increases for $t < t_0$ as the quality of kNN predictors  improves for larger datastores, 2) accuracy increases also for $t > t_0$, but at a slower pace than what observed under FIFO, as samples from the old distribution are never evicted.

\begin{figure}
    \centering
    \begin{minipage}{.45\textwidth}
        \centering
        \includegraphics[width=.9\linewidth]{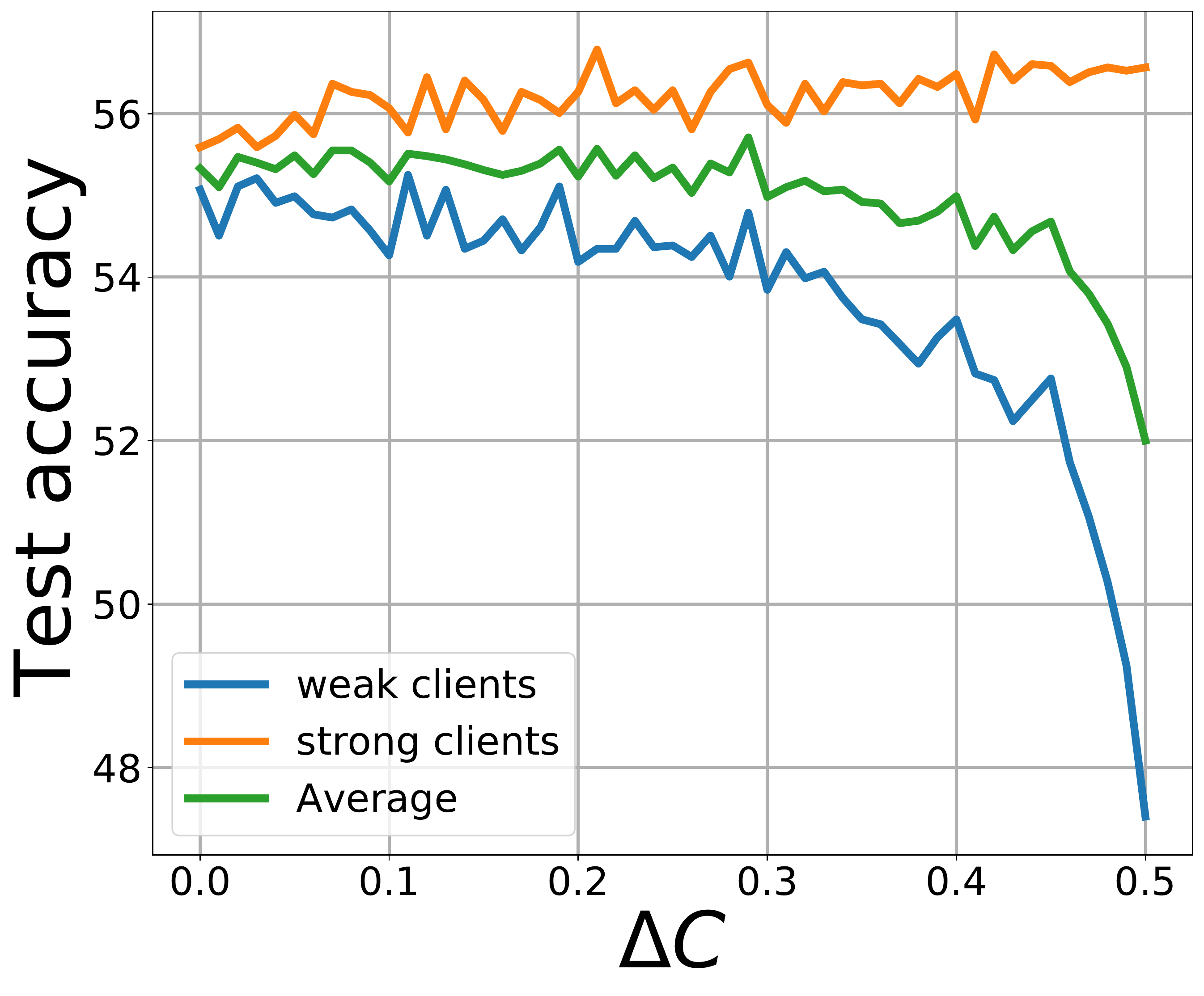}
        \caption{\small Effect of system heterogeneity across clients on CIFAR-100 dataset. The size of the local datastore increases (resp.~decreases) with $\Delta C$ for strong (resp.~weak) clients.}
        \label{fig:harware_hetero}
    \end{minipage}%
    \hfill
    \begin{minipage}{.45\textwidth}
        \centering
        \includegraphics[width=.9\linewidth]{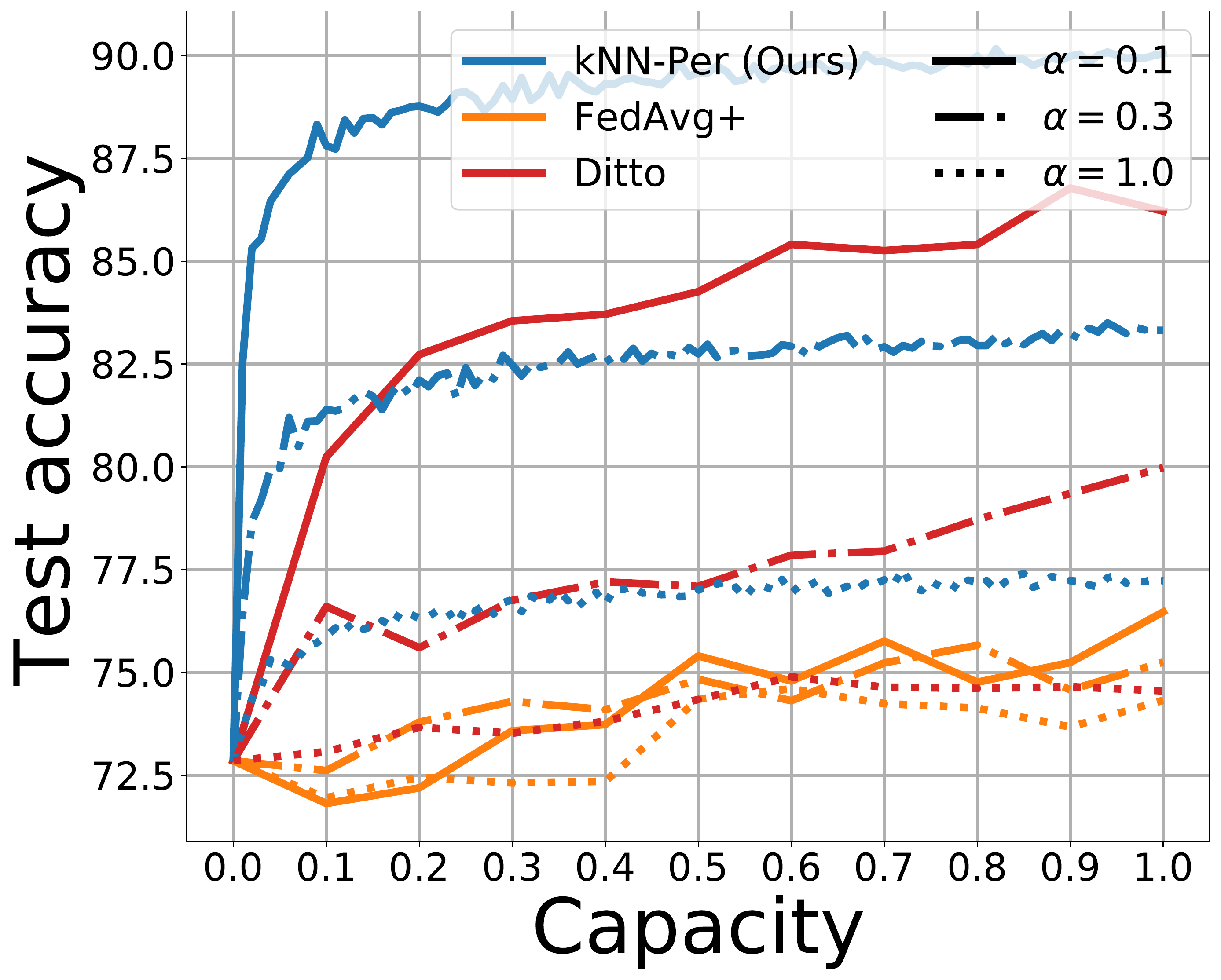}
        \caption{\small Test accuracy vs capacity (local datastore size) for different methods on CIFAR-10. The capacity is normalized with respect to the initial size of the client's dataset partition.}
        \label{f:hetero_effect_benchmark}
    \end{minipage}
\end{figure}

\begin{figure*}[!t]
    \centering
    \begin{subfigure}[b]{0.48\textwidth}  
        \centering 
        \includegraphics[width=\textwidth, height=0.8\textwidth]{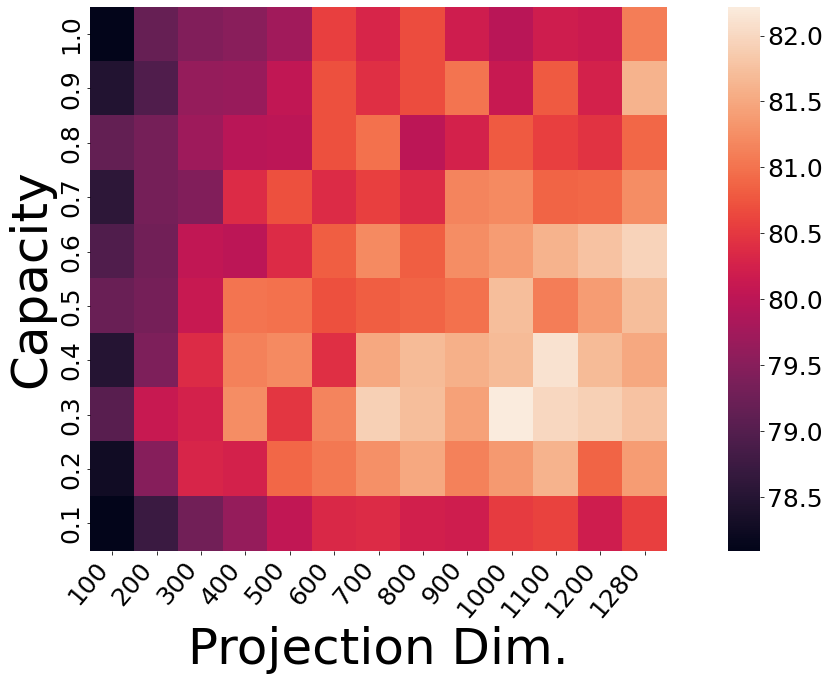}
        \subcaption[]{\small CIFAR-10.}
        \label{f:heatmap_cifar10}
    \end{subfigure}
    \hfill
    \begin{subfigure}[b]{0.48\textwidth}
        \centering
        \includegraphics[width=\textwidth, height=0.8\textwidth]{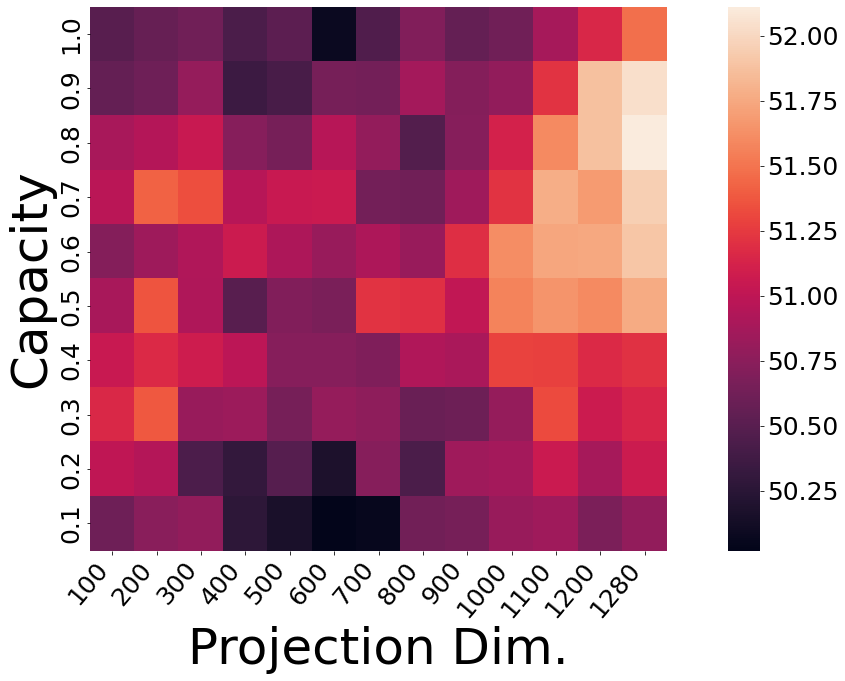}
        \subcaption[]{\small CIFAR-100.}
        \label{f:heatmap_cifar100}
    \end{subfigure}
    \hfill
    \caption[]
    {\small Test accuracy when the kNN mechanism is implemented through \texttt{ProtoNN} for different values of projection dimension and  number of prototypes (expressed as a fraction of the local dataset). CIFAR-10 (left) and CIFAR-100 (right) datasets.}
    \label{f:heatmap}
\end{figure*}

\begin{figure*}[!t]
    \centering
    \begin{subfigure}[b]{0.48\textwidth}  
        \centering 
        \includegraphics[width=\textwidth, height=0.8\textwidth]{Figures/CIFAR10/base_model_effect.pdf}
        \subcaption[]{\small CIFAR-10.}
    \end{subfigure}
    \hfill
    \begin{subfigure}[b]{0.48\textwidth}
        \centering
        \includegraphics[width=\textwidth, height=0.8\textwidth]{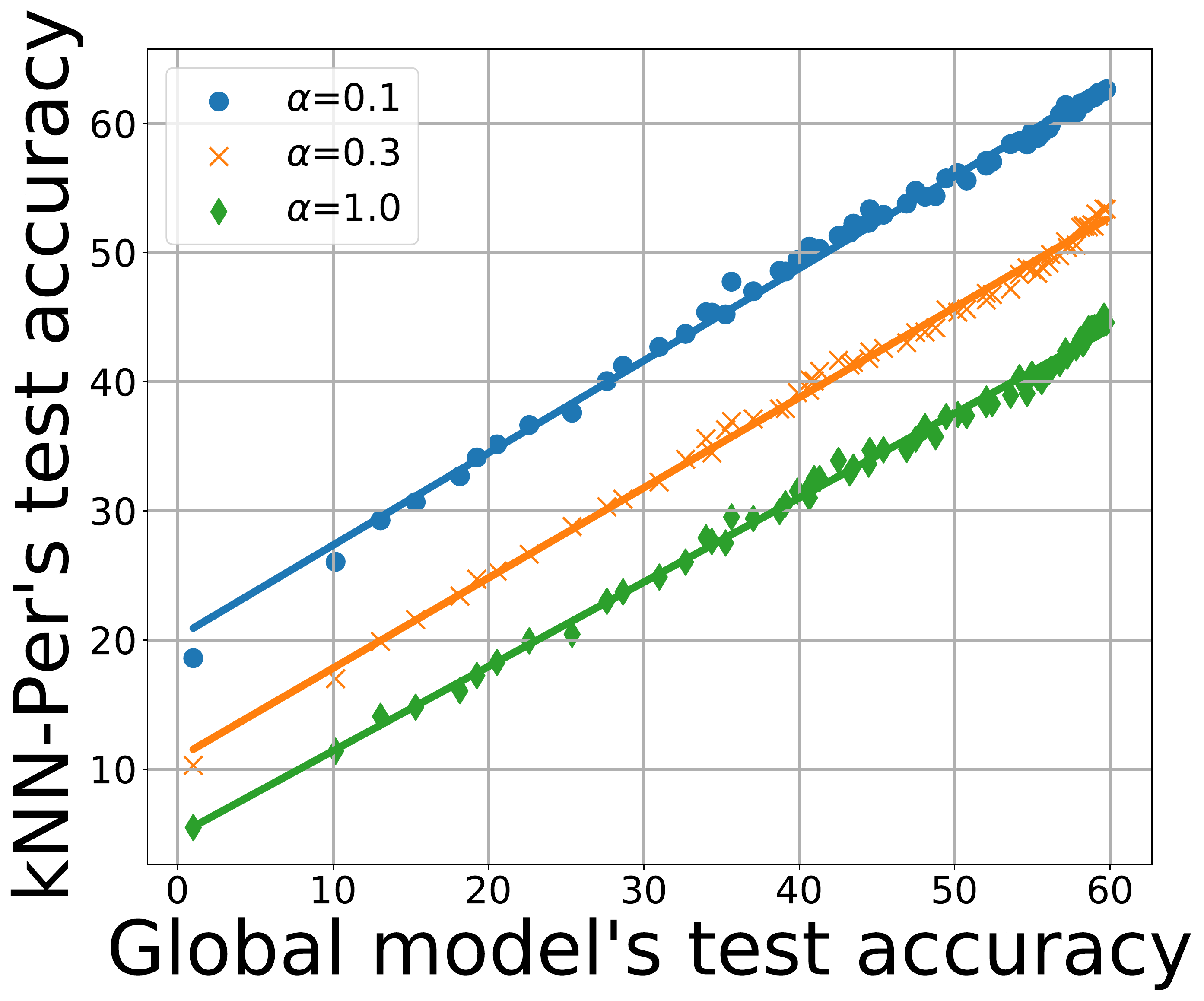}
        \subcaption[]{\small CIFAR-100.}
    \end{subfigure}
    \hfill
    \caption[]
    {\small Effect of the global model quality on the test accuracy of \texttt{kNN-Per} with $\lambda_m=1$ for each $m \in [M]$.}
    \label{f:base_model_effect}
\end{figure*}

\begin{figure*}[!t]
    \centering
    \begin{subfigure}[b]{0.48\textwidth}  
        \centering 
        \includegraphics[width=\textwidth, height=0.8\textwidth]{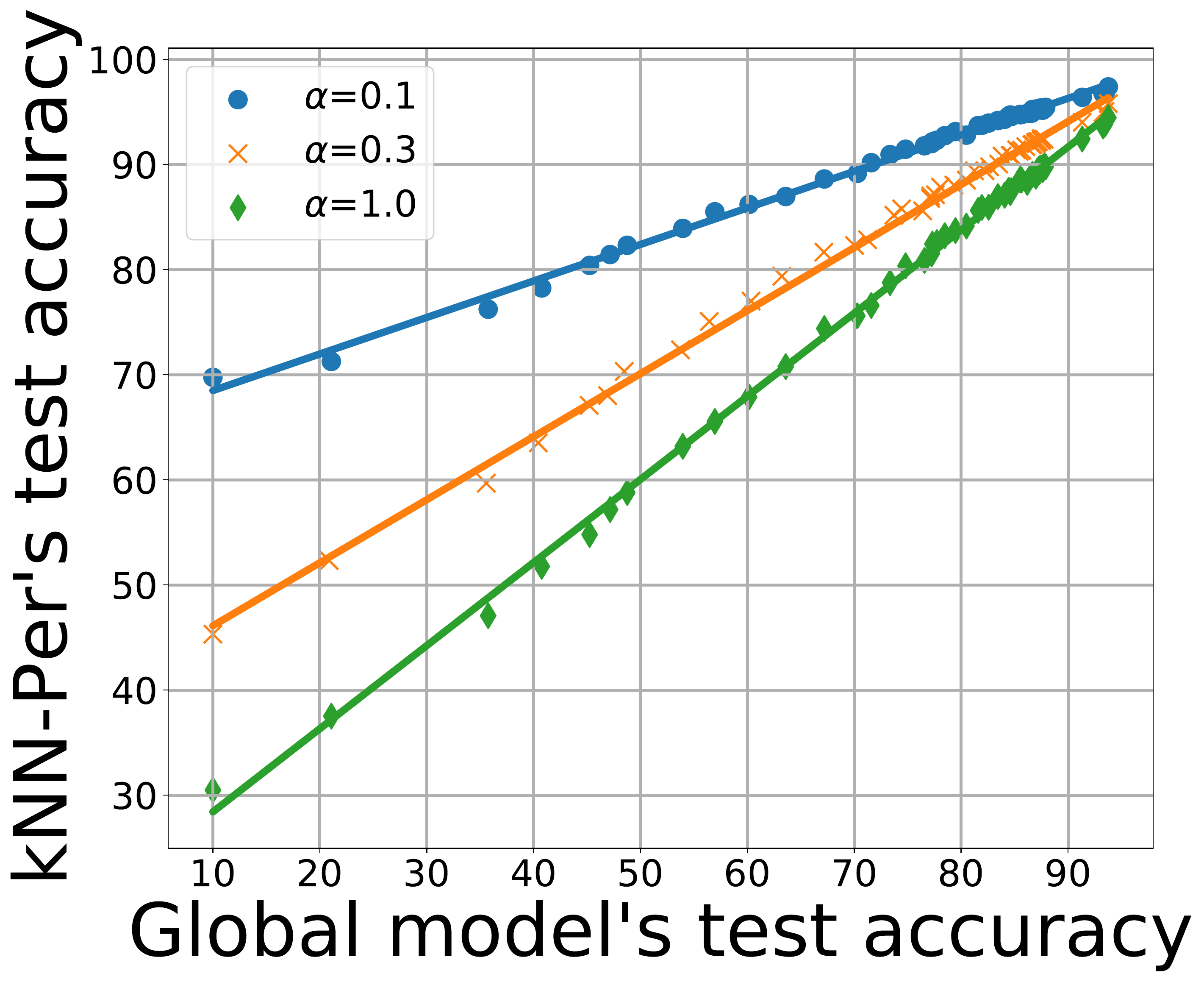}
        \subcaption[]{\small CIFAR-10.}
    \end{subfigure}
    \hfill
    \begin{subfigure}[b]{0.48\textwidth}
        \centering
        \includegraphics[width=\textwidth, height=0.8\textwidth]{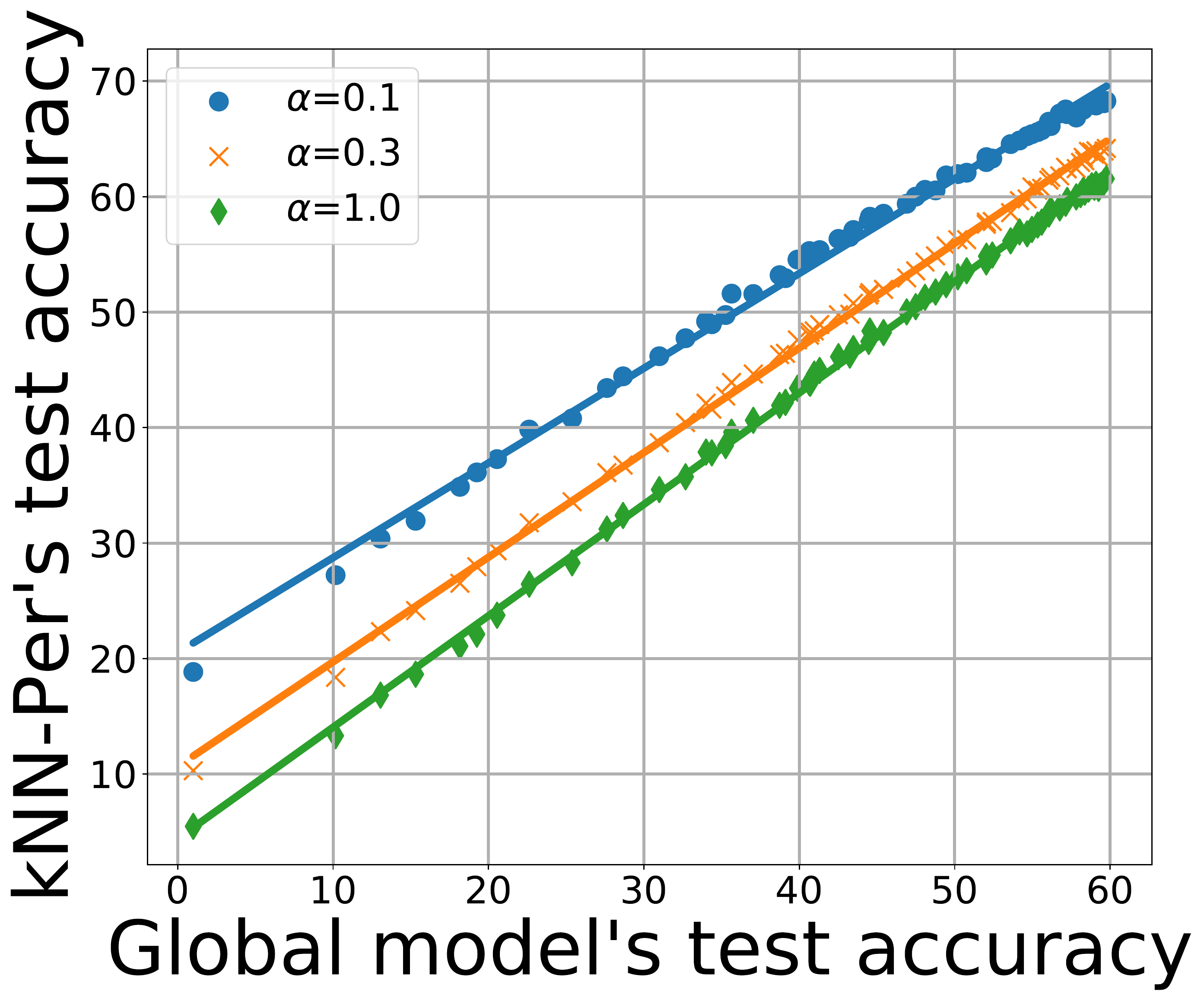}
        \subcaption[]{\small CIFAR-100.}
    \end{subfigure}
    \hfill
    \caption[]
    {\small Effect of the global model quality on the test accuracy of \texttt{kNN-Per} with $\lambda_m$ tuned per client.}
    \label{f:base_model_effect_lambda_opt}
\end{figure*}

\begin{figure*}[t]
    \centering
    \begin{subfigure}[b]{0.48\textwidth}  
        \centering 
        \includegraphics[width=\textwidth, height=0.8\textwidth]{Figures/CIFAR10/stream.pdf}
        \subcaption[]{\small CIFAR-10.}
        \label{f:stream_cifar10}
    \end{subfigure}
    \hfill
    \begin{subfigure}[b]{0.48\textwidth}
        \centering
        \includegraphics[width=\textwidth, height=0.8\textwidth]{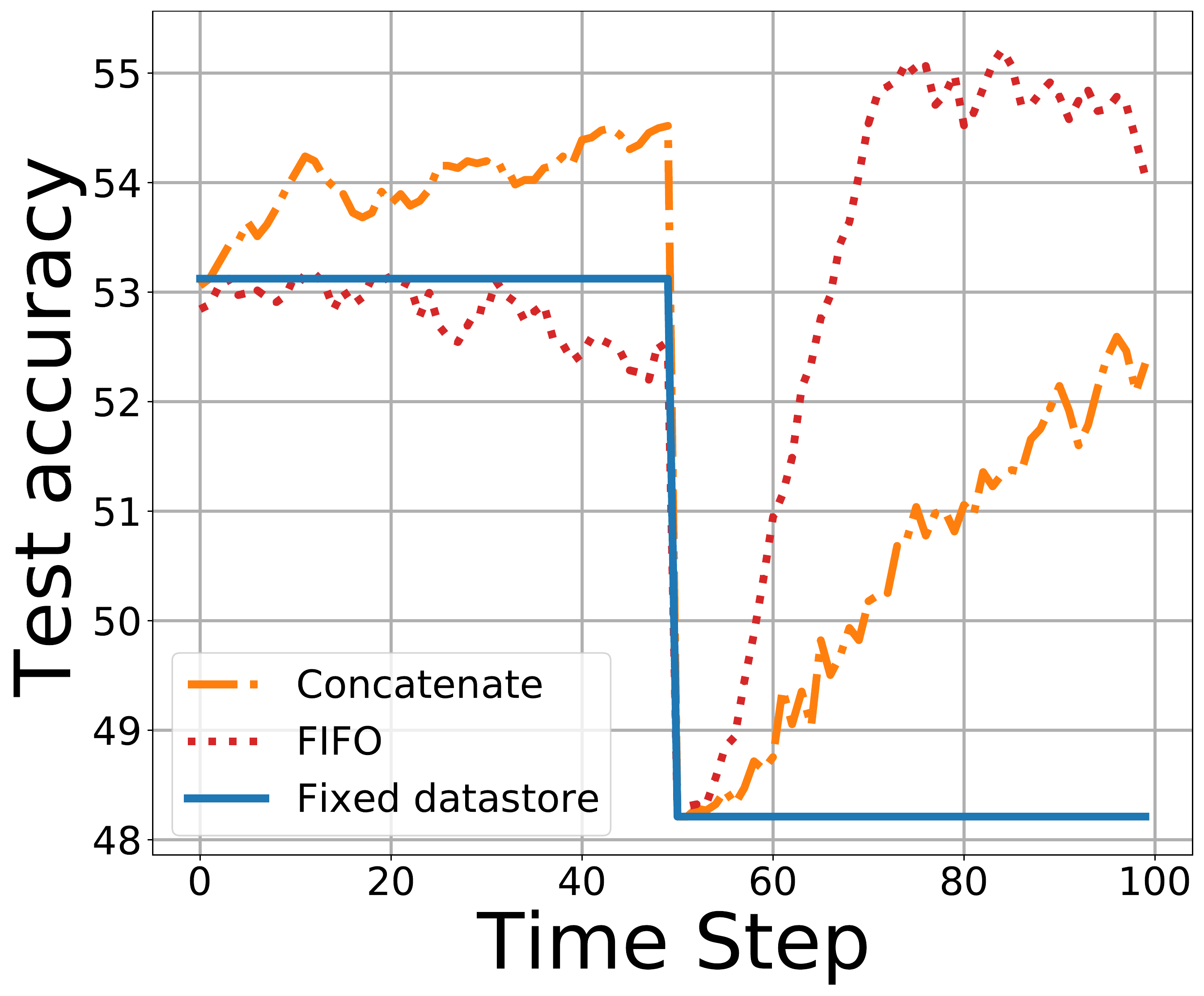}
        \subcaption[]{\small CIFAR-100.}
        \label{f:stream_cifar100}
    \end{subfigure}
    \hfill
    \caption[]
    {\small Test accuracy when a distribution shift happens at time step $t_{0}=50$ for different datastore management strategies.}
    \label{f:stream}
\end{figure*}

\end{document}